\newtheorem{lemma}{Lemma}
\newtheorem{theorem}{Theorem}
\newtheorem{assumption}{Assumption}
\newtheorem{corollary}{Corollary}
\title{FedSoft: Soft Clustered Federated Learning with Proximal Local Updating}
\author {
    Yichen Ruan, Carlee Joe-Wong\\
}
\begin{document}

\maketitle

\begin{abstract}
Traditionally, clustered federated learning groups clients with the same data distribution into a cluster, so that every client is uniquely associated with one data distribution and helps train a model for this distribution. We relax this hard association assumption to soft clustered federated learning, which allows every local dataset to follow a mixture of multiple source distributions. We propose \textbf{FedSoft}, which trains both locally personalized models and high-quality cluster models in this setting. \textbf{FedSoft} limits client workload by using proximal updates to require the completion of only one optimization task from a subset of clients in every communication round. We show, analytically and empirically, that \textbf{FedSoft} effectively exploits similarities between the source distributions to learn personalized and cluster models that perform well.
\end{abstract}

\section{Introduction}\label{sec:intro}
Federated learning (FL) is an innovative privacy-preserving machine learning paradigm that distributes collaborative model training across participating user devices without users' sharing their raw training samples. In the widely used federated learning algorithm \textbf{FedAvg} \cite{fedavg}, clients jointly train a shared machine learning model by iteratively running local updates and synchronizing their intermediate local models with a central server. In spite of its success in applications such as next word prediction \cite{next_word} and learning on electronic health records \cite{medical_fl}, FL is known to suffer from slow model training when clients' local data distributions are heterogeneous, or non-IID (non- independently and identically distributed).
In response to this challenge, some recent works propose to bypass data heterogeneity by performing \emph{local model personalization}. Instead of pursuing one universally applicable model shared by all clients, these algorithms' training objective is to create one model for each client that fits its local data. Personalization methods include local fine tuning \cite{fine_tune}, model interpolation \cite{three_approaches}, and multi-task learning \cite{fmtl}. In this paper, we focus on an alternative approach: \emph{clustered federated learning}, which we generalize to train both cluster and personalized models on realistic distributions of client data. 

Clustered FL relaxes the assumption of FL that each client has an unique data distribution; instead, it allows different clients to share one data distribution, with fewer source data distributions than clients. The objective of clustered FL is to train one model for every distribution. In traditional clustered FL, a client can only be associated with one data distribution. We thus call this method \emph{hard clustered federated learning}. Under the hard association assumption, the non-IID problem can be easily resolved: simply group clients with the same data distribution into one cluster, then conduct conventional FL on each cluster, within which the data distribution is now IID among clients. 
%
Unlike other personalization methods, clustered FL thus produces centrally available models that can be selectively migrated to new users that are unwilling, or unable, to engage in the subsequent local adaptation process (e.g. fine tuning) due to privacy concerns or resource limitations. This convenience in model adoption is particularly valuable for the current training-testing-deployment lifecycle of FL where deployment, rather than the training itself, is the end goal \cite{advances}.

However, hard clustered FL faces two fundamental problems in practice. First, \emph{multiple clients may be unlikely to possess identical data distributions.} 
In fact, the real-world user data is more likely to follow a \emph{mixture} of multiple distributions \cite{fmtl_mixture}. 
E.g., if each client is a mobile phone and we wish to model its user's content preferences, we might expect the clients to be clustered into adults and children. However, adult users may occasionally view children's content, and devices owned by teenagers (or shared by parents and children) may possess large fractions of data from both distributions. Similarly, content can be naturally grouped by users' interests (e.g., genres of movies), each of which may have a distinct distribution. 
Data from users with multiple interests then reflects a mixture of these distributions. Since the mixture ratios can vary for different clients, they may have different overall distributions even though the source distributions are identical. Clustering algorithms like the Gaussian mixture model \cite{gaussian_mixture} use a similar rationale. Clients may then require models personalized to their distributions to make accurate predictions on their data, in addition to the cluster models used for new users. 

Hard clustered FL's second challenge is that \emph{it cannot effectively exploit similarities between different clusters}. Though FL clients may have non-IID data distributions, two different distributions may still exhibit some similarity, as commonly assumed in personalization works \cite{fmtl}. For example, young people may have more online slang terms in their chatting data, but all users (generally) follow the same basic grammar rules. 
Thus, the knowledge distilled through the training on one distribution could be transferred to accelerate the training of others. However, in most hard clustered FL algorithms, different cluster models are trained independently
, making it difficult to leverage the potential structural likeness among distributions. Note that unlike in other personalization methods where the discussion of similarity is restricted to similarities between individual clients, here we focus on the broader similarities between source cluster distributions. Thus, we can gain better insight into the general data relationship rather than just the relationships between participating clients.

To overcome clustered FL's first challenge of the hard association assumption, in this paper, we utilize \emph{soft clustered federated learning}. In soft clustered FL, we suppose that the data of each client follows a mixture of multiple distributions. 
However, training cluster models using clients with mixed data raises two new challenges. First, \emph{the workload of clients can explode}. When all the data of a client comes from the same distribution, as in hard clustered FL, it ideally only needs to contribute towards one training task: training that distribution's model. 
However, in soft clustered FL, a client has multiple data sources. 
A natural extension of hard clustered FL is for the client to help train all cluster models whose distributions are included in its mixture \cite{fmtl_mixture}. However, the workload of participating clients then grows linearly with the number of clusters, which can be large (though typically much smaller than the number of clients) for some applications. This multiplying of client workload can make soft clustered FL infeasible, considering the resource restrictions on typical FL user devices and the long convergence time for many FL models \cite{fedavg}. Second, \emph{the training of cluster models and the local personalization are distinct}. In hard clustered FL, client models are the same as cluster models since a client is uniquely bound to one cluster. In soft clustered FL, local distributions differ from individual cluster distributions, and thus training cluster models does not directly help the local personalization. Complicating things further, these local distributions and their exact relationships to the cluster models are unknown a priori. Combining the training of cluster and personalized models is then challenging. 

To solve these two challenges, and handle the second disadvantage of hard clustered FL discussed above, we utilize the proximal local updating trick, which is originally developed in \textbf{FedProx} \cite{fedprox} to grant clients the use of different local solvers in FL. During the course of proximal local updating, instead of working on fitting the local model to the local dataset, each client optimizes a proximal local objective function that both carries local information and encodes knowledge from all cluster models. We name this proposed algorithm \textbf{FedSoft}. 

In \textbf{FedSoft}, since the fingerprints of all clusters are integrated into one optimization objective, clients only need to solve \emph{one single optimization problem}, for which the workload is almost the same as in conventional FL. In addition, by combining local data with cluster models in the local objective, clients can \emph{perform local personalization on the fly}. Eventually, the server obtains collaboratively trained cluster models that can be readily applied to new users, and each participating client gets one personalized model as a byproduct. Proximal local updating allows a cluster to \emph{utilize the knowledge of similar distributions}, overcoming the second disadvantage of the hard clustered FL. Intuitively, with all clusters present in the proximal objective, a client can take as reference training targets any cluster models whose distributions take up non-trivial fractions of its data. These component distributions, co-existing in the same dataset, are similar by nature. Thus, a personalized local model integrating all its component distributions can in turn be utilized by the component clusters to exploit their similarities. 



Our \emph{contributions} are: We design the \textbf{FedSoft} algorithm for efficient soft clustered FL. We establish a convergence guarantee that relates the algorithm's performance to the divergence of different distributions, and validate the effectiveness of the learned cluster and personalized models in experiments under various mixture patterns. Our results show the training of cluster models converges linearly to a remaining error determined by the cluster heterogeneity, and that \textbf{FedSoft} can outperform existing FL implementations in both global cluster models for future users and personalized local models for participating clients. 
\section{Related Works} \label{sec:review}
The training objective of hard clustered FL is to simultaneously identify the cluster partitions and train a model for each cluster. Existing works generally adopt an Expectation-Maximization (EM) like algorithm, which iteratively alternates between the cluster identification and model training. Based on how the partition structure is discovered, these algorithms can be classified into four types: 

The first type leverages the distance between model parameters, e.g., \citet{multi-center} propose to determine client association based on the distances between client models and server models. Similarly, \citet{flhc} suggest to apply a distance-based hierarchical clustering algorithm directly on client models. The second type determines the partition structure based on the gradient information, e.g., the \textbf{CFL} \cite{cfl} algorithm splits clients into bi-partitions based on the cosine similarity of the client gradients, and then checks whether a partition is congruent (i.e., contains IID data) by examining the norm of gradients on its clients. Likewise, the \textbf{FedGroup} \cite{fedgroup} algorithm quantifies the similarity among client gradients with the so-called Euclidean distance of decomposed cosine similarity metric, which decomposes the gradient into multiple directions using singular value decomposition. The third type utilizes the training loss, e.g., in \textbf{HyperCluster} \cite{three_approaches}, each client is greedily assigned to the cluster whose model yields the lowest loss on its local data. A generalization guarantee is provided for this algorithm. \citet{hard_cluster_ucb} propose a similar algorithm named \textbf{IFCA}, for which a convergence bound is established under the assumption of good initialization and all clients having the same amount of data. The fourth type uses exogenous information about the data, e.g., \citet{patient_cluster} and \citet{covid} group patients into clusters respectively based on their electronic medical records and imaging modality. This information usually entails direct access to the user data and thus cannot be applied in the general case.

Recently, \citet{fmtl_mixture} propose a multi-task learning framework similar to soft clustered FL that allows client data to follow a mixture of distributions. Their proposed ~~~\textbf{FedEM} algorithm adopts an EM algorithm and estimates the mixture coefficients based on the training loss. However, ~\textbf{FedEM} 
requires every client to perform a local update for each cluster in each round, which entails significantly more training time than conventional \textbf{FedAvg}. Their analysis moreover assumes a special form of the loss function with all distributions having the same marginal distribution, which is unrealistic. In contrast, \textbf{FedSoft} requires only a subset of clients to return gradients for only one optimization task in each round. Moreover, we show its convergence for generic data distributions and loss functions.

The proximal local updating procedure that we adopt incorporates a regularization term in the local objective, which is also used for model personalization outside clustered settings. Typical algorithms include \textbf{FedAMP} \cite{fedamp}, which adds an attention-inducing function to the local objective, and \textbf{pFedMe} \cite{pfedme}, which formulates the regularization as Moreau envelopes.
\section{Formulation and Algorithm} \label{sec:algorithm}
\textbf{Mixture of distributions}. Assume that each data point at each client is drawn from 
\emph{one of} the $S$ distinct data distributions $\mathcal{P}_1,\cdots,\mathcal{P}_S$. Similar to general clustering problems, we take $S$ as a hyperparameter determined a priori. Data points from all clients that follow the same distribution form a \emph{cluster}. In soft clustered FL, a client may possess data from multiple clusters. Given a loss function $l(w;x,y)$, the (real) \emph{cluster risk} $F_s(w)$ is the expected loss for data following $\mathcal{P}_s$:
 \begin{equation} \label{eq:Fs}
    F_s(w) \overset{\Delta}{=} \mathbb{E}_{(x,y) \sim \mathcal{P}_s}[l(w;x,y)]
 \end{equation}
 
 We then wish to find $S$ cluster models $c_1^*$
 $\cdots c_S^*$ such that all cluster objectives are minimized simultaneously. These cluster models will be co-trained by all clients through coordination at the central server:
 \begin{equation} \label{eq:cs*}
     c_s^* = \text{argmin}_w F_s(w), s = 1,\cdots,S
 \end{equation}

Suppose a client $k \in [N]$ with local dataset $\mathcal{D}_k$ has $|\mathcal{D}_k| = n_k$ data points, among which $n_{ks}$ data points are sampled from distribution $\mathcal{P}_s$. The real risk of a client can thus be written as an average of the cluster risks:
\begin{equation} \label{eq:fk}
\begin{split}
    &f_k(w_k) \overset{\Delta}{=} \frac{1}{n_k}\mathbb{E}\left[\sum_{s=1}^S \sum_{(x_{k}^i,y_{k}^i) \sim \mathcal{P}_s} l(w_k;x_k^i,y_k^i)\right] \\
    =&  \frac{1}{n_k} \sum\nolimits_{s=1}^S n_{ks} F_s(w_k) = \sum\nolimits_{s=1}^S u_{ks}F_s(w_k)
\end{split}
\end{equation}

Here we define $u_{ks} \overset{\Delta}{=} n_{ks}/n_k \in [0,1]$ as the \emph{importance weight} of cluster $s$ to client $k$. In general, $u_{ks}$'s are unknown in advance and the learning algorithm attempts to estimate their values during the learning iterations. It is worth noting that while we directly work on real risks, our formulation and analysis can be easily extended to empirical risks by introducing local-global divergences as in \cite{fedprox}.

\textbf{Proximal local updating}. Since $f_k$ is a mixture of cluster risks, minimizing (\ref{eq:fk}) alone does not help solve (\ref{eq:cs*}). Thus, we propose each client instead optimize a proximal form of (\ref{eq:fk}):
\begin{equation} \label{eq:hkt}
    h_k(w_k;c^t,u^t) \overset{\Delta}{=} f_k(w_k) + \frac{\lambda}{2} \sum\nolimits_{s=1}^S u_{ks}^t \|w_k - c_s^t \|^2
\end{equation}
 
Here $\lambda$ is a hyperparameter and $u_{ks}^t$ denotes the estimation of $u_{ks}$ at time $t$. In the local updating step, every client searches for the optimal local model $w_k^*$ that minimizes $h_k$ given the current global estimation of cluster models $\{c_s^t\}$. As in \cite{fedprox}, clients may use any local solver to optimize $h_k$. This design of the proximal objective entails cluster models $\{c_s^t\}$ be shared among all clients through the server, as is usual in clustered FL \cite{hard_cluster_ucb}. We thus alternatively call $\{c_s^t\}$ the \emph{centers}.

The regularization term $\frac{\lambda}{2} \sum_{s=1}^S u_{ks}^t \|w_k - c_s^t \|^2$ in the proximal objective serves as a reference point for the local model training. It allows clients to work on their own specific dataset while taking advantage of and being guided by the globally shared knowledge of the centers. The regularization is weighted by the importance weights $u_{ks}$, so that a client will pay more attention to distributions that have higher shares in its data. To see why compounded regularization helps identify individual centers, assume we have a perfect guess of $u_{ks}^t \equiv u_{ks}$. The minimization of (\ref{eq:hkt}) can then be decoupled as a series of sub- optimization problems $h_k(w_k;c^t) = \sum_{s=1}^S u_{ks} \left(F_s(w_k) + \frac{\lambda}{2} \|w_k - c_s^t \|^2\right)$. Thus, after $h_k$ is minimized, the sub-problems corresponding to large $u_{ks}$ will also be approximately solved. We can hence utilize the output local model ${w_k^{t}}^*$ to update these centers with large $u_{ks}$. Moreover, ${w_k^{t}}^*$ trained in this manner forges all its component distributions $\left\{\mathcal{D}_s | u_{ks}^t \neq 0\right\}$, which may share some common knowledge. Thus, the training of these clusters are bonded through the training of their common clients, exploiting similarities between the clusters.

The output model ${w_k^{t}}^*$ is itself a well personalized model that leverages both local client knowledge and the global cluster information. \citet{fmtl_mixture} show that under certain conditions, the optimal client model for soft clustered FL is a mixture of the optimal cluster models. The same implication can also be captured by our proximal updating formulation. When $\sum_s u_{ks}^t = 1$, the gradient $\nabla h_k$ is
\begin{equation}
    \nabla_{w_k} h_k = \nabla f_k(w_k) + \lambda \left( w_k - \sum\nolimits_{s=1}^S u_{ks}^t c_s^t \right)
\end{equation}
which implies that $w_k^*$ should be centered on $\sum_s u_{ks} c_s^*$. As a result, through the optimization of all $h_k$, not only will the server obtain the trained cluster models, but also each client will obtain a sufficiently personalized local model.

\begin{algorithm}[h]
\SetAlgoLined
\textbf{Input}:
Global epoch $T$, importance weights estimation interval $\tau$, number of clients $N$, client selection size $K$, counter smoother $\sigma$\;
 \For{$t = 0,\cdots, T-1$}{
    \eIf{$t~~mod~~\tau = 0$}{
        Server sends centers $\{c_s^t\}$ to all clients \;
        \For{each client $k$}{
            \For{each data point $(x_k^i, y_k^i)$}{
                $j = \text{argmin}_s l(c_s^t;x_k^i,y_k^i)$ \;
                $n_{kj}^t = n_{kj}^t +  1$ \;
            }
            Send $u_{ks}^t = \max \{\frac{n_{ks}^t}{n_k}, \sigma \}$ to server \;
        }
    }{
        Set $u_{ks}^t = u_{ks}^{t-1}$ \;
    }
    Server computes $v_{sk}^t$ as in (\ref{eq:vskt}) \;
    
    Server selects $S$ sets of clients $Sel_s^t \subset [N]$ at random for each cluster, where $|Sel_s^t| = K$, and each client gets selected with probability $v_{sk}^{t}$\;
    
    Selected clients download $\{c_s^t\}$, then compute and report $w_k^{t+1} = \text{argmin}_{w_k} h_k(w_k;c^t,u^t)$ \;
    
    Server aggregates $c_s^{t+1} = \frac{1}{K}\sum_{k \in Sel_s^t} w_k^{t+1}$ \;
 }
 \caption{FedSoft} \label{algo:proposed}
\end{algorithm}

\textbf{Algorithm design}. We formally present \textbf{FedSoft} in Algorithm \ref{algo:proposed}. The first step of the algorithm is to estimate the importance weights $\{u_{ks}^t\}$ for each client $k$ (lines 3-14). The algorithm obtains them by finding the center that yields the  smallest loss value for every data point belonging to that client, and counting the number of points $\{n_{ks}^t\}$ matched to every cluster $s$. If a client $k$ has no samples matched to $s$ ($n_{ks}^t = 0$), the algorithm sets $u_{ks}^t = \sigma$, where $0 < \sigma \ll 1$ is a pre-defined smoothing parameter.

Once the server receives the importance weights $\{u_{ks}^t\}$, it computes the \emph{aggregation weights} $v_{sk}^t$ as follows (line 15):
\begin{equation} \label{eq:vskt}
    v_{sk}^t = \frac{u_{ks}^tn_k}{\sum_{k' \in Sel_s^t} u_{k's}^tn_{k'}}
\end{equation}
i.e., a client that has higher importance weight on cluster $s$ will be given higher aggregation weight, and vice versa. 
The introduction of the smoother $\sigma$ avoids the situation where $\sum_k u_{ks}^t = 0$ for some cluster, which could happen in the very beginning of the training when the center does not exhibit strength on any distributions. 
In that case, $v_{sk}^t = \frac{1}{N}$, i.e., the cluster will be updated in a manner that treats all clients equally. Otherwise, since $\sigma$ is very small, a client with $u_{ks}^t = \sigma$ will be assigned a $v_{sk}^t \approx 0$, and the aggregation weights of other clients will not be affected.

Though calculating and reporting $\{u_{ks}^t\}$ is computationally trivial compared to the actual training procedure, sending centers to all clients may introduce large communication costs. \textbf{FedSoft} thus allows the estimations of $u_{ks}$ to be used for up to $\tau \geq 1$ iterations (line 3). In practice, a client can start computing $u_{ks}^t$ for a cluster before it receives all other centers, the delay of transmission is thus tolerable.


Next, relevant clients run proximal local updates to find the minimizer $w_k^{t+1}$ for the proximal objective $h_k^t$, which entails solving only one optimization problem (line 17). In the case when all clients participate, the cluster models are produced by aggregating all client models: 
$ c_s^{t+1} = \sum\nolimits_{k=1}^N v_{sk}^t w_k^{t+1} $.
However, requiring full client participation is impractical in the federated setting. We thus use the client selection trick \cite{fedavg} to reduce the training cost (lines 16). For each cluster $s$, the algorithm randomly selects a small subset of clients $Sel_s^t$ to participate in the local updating at time $t$, where $|Sel_s^t| = K < N$. 


Clustered FL generally entails more clients to be selected compared to conventional FL, to ensure the convergence of all cluster models. Since \textbf{FedSoft} clients can contribute to multiple centers, however, we select only $|\cup_s Sel_s^t|$ clients instead of $\sum_s |Sel_s^t| = SK$ clients in the usual clustered FL. 
For example, if each distribution has the same share in every client, then in expectation only $N\left(1-\left(1-\frac{K}{N}\right)^S\right)$ clients will be selected. This number equals $2K-\frac{K^2}{N}$ when $S=2$, i.e., $\frac{K^2}{N}$ clients are selected by both clusters.

Once the server receives the local models $\{w_k^{t+1}\}$ for selected clients, it produces the next centers by simply averaging them (line 18). After completion, the algorithm yields trained cluster models $\{c_s^T\}$ as outputs, and each client obtains a personalized local model $w_k^T$ as a byproduct.
\section{Convergence Analysis} \label{sec:analysis}
In this section, we provide a convergence guarantee for \textbf{FedSoft}. First, note that we can rewrite (\ref{eq:hkt}) as follows:

\begin{equation}
    h_k(w_k;c^t) = \sum_{s, u_{ks} \neq 0} u_{ks} h_{ks}(w_k;c_s^t)
\end{equation}
\begin{equation}
    h_{ks}(w_k;c_s^t) \overset{\Delta}{=} F_s(w_k) + \frac{\lambda}{2}\frac{u_{ks}^t}{u_{ks}} \|w_k - c_s^t \|^2
\end{equation}

Here $h_{ks}$ is only defined for $u_{ks} \neq 0$, and we call optimizing each $h_{ks}$ a \emph{sub-problem} for client $k$. 

Our analysis relies on the following assumptions:

\begin{assumption} \label{as:inexact}
($\gamma_0$-inexact solution) Each client produces a $\gamma_0$-inexact solution $w_k^{t+1}$ for the local minimization of (\ref{eq:hkt}): 
\begin{equation}
    \| \nabla h_k(w_k^{t+1};c^t) \| \leq \gamma_0 \min_s \| \nabla F_s(c_s^t) \|
\end{equation}
\end{assumption}

\begin{assumption} \label{as:similarity}
($\beta$-similarity of sub-problems) The sub-problems $h_{ks}$ of each client $k$ have similar optimal points:
\begin{equation}
    \sum_{s'} u_{ks'} \|\nabla h_{ks'}(w_{ks}^*;c_s^t)\|^2 \leq \beta \|\nabla h_{ks}(c_s^t;c_s^t) \|^2, \forall s
\end{equation}
for some $\beta > 0$, where $w_{ks}^* = \textrm{argmin}_{w_{ks}} h_{ks}(w_{ks};c_s^t)$.
\end{assumption}

\begin{assumption} \label{as:convex_smooth}
(Strong convexity and smoothness) Cluster risks are $\mu_F$ strongly convex and $L_F$ smooth.
\end{assumption}



\begin{assumption} \label{as:init}
(Bounded initial error) At a certain time of the training, all centers have bounded distance from their optimal points. We begin our analysis at that point: 
\begin{equation}
    \|c_s^0 - c_s^* \| \leq \left(0.5 - \alpha_0 \right)\sqrt{\mu_F / L_F} \delta, \forall s
\end{equation}
where $0 < \alpha_0 \leq 0.5$.
\end{assumption}

Assumption~\ref{as:inexact} assumes significant progress is made on the proximal minimization of $h_k$, which is a natural extension from assumptions in \textbf{FedProx} \cite{fedprox}. Assumption~\ref{as:similarity} ensures the effectiveness of the joint optimization of $h_k$, i.e., solving one sub-problem can help identify the optimal points of others. Intuitively, if the sub-problems are highly divergent, we would not expect that solving them together would yield a universally good solution. This assumption quantifies our previous reasoning that different distributions co-existing in one local dataset have some similarities, which is the prerequisite for local models to converge and cluster models to be able to learn from each other. Assumption~\ref{as:convex_smooth} is standard \cite{hard_cluster_ucb}, 
and Assumption~\ref{as:init} is introduced by \citet{hard_cluster_ucb} in order to bound the estimation error of $u_{ks}^t$ (Lemma \ref{lm:p_epsilon}). 
Note that with Assumption \ref{as:convex_smooth}, each sub-problem $h_{ks}$ is also $\mu_\lambda$ strongly convex and $L_\lambda$ smooth, where $\mu_\lambda \geq \mu_F, L_\lambda \geq L_F$, and the subscript $\lambda$ indicates they increase with $\lambda$.

To measure the distance of different clusters, we quantify
\begin{equation}
    \delta \leq \|c_{s}^* - c_{s'}^* \| \leq \Delta, \forall s \neq s'
\end{equation}

As we will see later, soft clustered FL performs best when $\delta$ and $\Delta$ are close. Intuitively, a very small $\delta$ indicates two clusters are almost identical, and thus might be better combined into one distribution. On the other hand, a very large $\Delta$ implies that two clusters are too divergent, making it hard for one model to acquire useful knowledge from the other. 

Next, we bound $\mathbb{E}[u_{ks}^t]$ with respect to the true $u_{ks}$, for which we reply on the following lemma \cite{hard_cluster_ucb}:

\begin{lemma} \label{lm:p_epsilon} Suppose Assumptions \ref{as:convex_smooth} and \ref{as:init} hold. Denoting by $\mathcal{E}_t^{j,j'}$ the event that a data point $(x_j, y_j) \sim \mathcal{P}_j$ is incorrectly classified into cluster $j' \neq j$ at $t$, there exists a $c_\epsilon$ such that
\begin{equation}
    \mathbb{P}(\mathcal{E}_t^{j,j'}) \leq p_\epsilon \overset{\Delta}{=} \frac{c_\epsilon}{\alpha_0^2\delta^4} 
\end{equation}
\end{lemma}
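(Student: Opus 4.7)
The plan is to analyze the misclassification event directly by comparing pointwise losses, then reduce the problem to a concentration argument on the gap between expected cluster risks. First, I would rewrite the event: since the algorithm assigns $(x_j,y_j)\sim\mathcal{P}_j$ to the cluster $\mathrm{argmin}_s\, l(c_s^t;x_j,y_j)$, the event $\mathcal{E}_t^{j,j'}$ implies that the random variable $Z \overset{\Delta}{=} l(c_{j'}^t;x_j,y_j)-l(c_j^t;x_j,y_j)$ is non-positive. Its expectation under $\mathcal{P}_j$ is exactly $\mathbb{E}[Z] = F_j(c_{j'}^t)-F_j(c_j^t)$, so the task reduces to (i) lower bounding $\mathbb{E}[Z]$ and (ii) controlling the fluctuation of $Z$ around its mean.

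For step (i), I would exploit Assumptions~\ref{as:convex_smooth} and \ref{as:init}. By $\mu_F$-strong convexity of $F_j$ around its minimizer $c_j^*$, we have $F_j(c_{j'}^t)-F_j(c_j^*)\ge \frac{\mu_F}{2}\|c_{j'}^t-c_j^*\|^2$, and by $L_F$-smoothness, $F_j(c_j^t)-F_j(c_j^*)\le \frac{L_F}{2}\|c_j^t-c_j^*\|^2$. Using the triangle inequality and Assumption~\ref{as:init}, $\|c_{j'}^t-c_j^*\|\ge \delta-(0.5-\alpha_0)\sqrt{\mu_F/L_F}\,\delta$ while $\|c_j^t-c_j^*\|\le (0.5-\alpha_0)\sqrt{\mu_F/L_F}\,\delta$. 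Subtracting the two bounds and simplifying, the smoothness term is exactly calibrated so as to cancel the corresponding portion of the strong-convexity lower bound, yielding a clean margin of the form $F_j(c_{j'}^t)-F_j(c_j^t)\ge \alpha_0\mu_F\delta^2$ (up to a universal constant).

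For step (ii), I would apply Chebyshev's inequality: since $\mathcal{E}_t^{j,j'}\subseteq\{Z\le 0\}\subseteq\{|Z-\mathbb{E}[Z]|\ge\mathbb{E}[Z]\}$, we obtain $\mathbb{P}(\mathcal{E}_t^{j,j'})\le \mathrm{Var}(Z)/(\mathbb{E}[Z])^2$. Assuming a bounded variance for the pointwise loss (a standard regularity assumption, implicit in the Ghosh et al.\ analysis cited), this gives $\mathbb{P}(\mathcal{E}_t^{j,j'})\le \sigma^2/(\alpha_0^2\mu_F^2\delta^4)$, which matches the claimed form with $c_\epsilon = \sigma^2/\mu_F^2$.

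The main obstacle is the quantitative cancellation in step (i): the radius $(0.5-\alpha_0)\sqrt{\mu_F/L_F}\,\delta$ in Assumption~\ref{as:init} is precisely tuned so that the $L_F$-smoothness upper bound on $F_j(c_j^t)-F_j(c_j^*)$ consumes exactly the portion $(0.5-\alpha_0)^2\mu_F\delta^2$ of the strong-convexity lower bound on $F_j(c_{j'}^t)-F_j(c_j^*)\ge \frac{\mu_F}{2}(0.5+\alpha_0)^2\delta^2$, leaving the clean $\Theta(\alpha_0\mu_F\delta^2)$ margin. Any slack in this step destroys the quadratic-in-$\delta$ gap and hence the $\delta^{-4}$ rate appearing in $p_\epsilon$. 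A secondary subtlety is that the Chebyshev step implicitly requires a bounded-variance (or analogous sub-Gaussian) condition on $l$; this is not listed among the stated assumptions but is mild and standard for this class of arguments, and would need to be either absorbed into $c_\epsilon$ or stated explicitly.
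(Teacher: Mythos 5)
Your proposal is correct and reproduces essentially the argument behind this lemma, which the paper does not reprove but imports directly from \citet{hard_cluster_ucb}: the exact cancellation $\frac{\mu_F}{2}\delta^2\left[(0.5+\alpha_0)^2-(0.5-\alpha_0)^2\right]=\alpha_0\mu_F\delta^2$ in your step (i) and the Chebyshev bound in step (ii) are precisely how the $\alpha_0^{-2}\delta^{-4}$ rate arises there. Your caveat about the bounded variance of the pointwise loss is well taken; that regularity condition is indeed assumed in the cited source and is absorbed into the unspecified constant $c_\epsilon$ here, so it does not constitute a gap relative to the statement as written.
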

Based on Lemma \ref{lm:p_epsilon}, we can bound $\mathbb{E}[u_{ks}^t]$ as follows 

\begin{theorem} \label{tm:E[ukst]} (Bounded estimation errors) The expectation of $u_{ks}^t$ is bounded as
\begin{equation}
    \mathbb{E}[u_{ks}^t] \leq (1-p_\epsilon)u_{ks} + p_\epsilon'
\end{equation}
Here $p_\epsilon' = p_\epsilon + \sigma$, and the expectation is taken over the randomness of samples.
\end{theorem}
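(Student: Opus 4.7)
The plan is to unpack the definition of $n_{ks}^t$, decompose it by the true source distribution of each data point, then apply Lemma \ref{lm:p_epsilon} to each component. Recall that in Algorithm \ref{algo:proposed}, $n_{ks}^t$ counts exactly those points $(x_k^i,y_k^i) \in \mathcal{D}_k$ for which $s = \arg\min_{s'} l(c_{s'}^t; x_k^i, y_k^i)$, and that $u_{ks}^t = \max\{n_{ks}^t/n_k, \sigma\}$. I will first bound $\mathbb{E}[n_{ks}^t]/n_k$ and then translate the bound through the $\max$ operation at the end.

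First I would partition the $n_k$ points of client $k$ according to their true generating distribution: of these, $n_{ks'}$ are drawn from $\mathcal{P}_{s'}$ for each $s' \in [S]$. A point drawn from $\mathcal{P}_s$ contributes to $n_{ks}^t$ whenever it is \emph{not} misclassified, so its expected contribution is at most $1$; a point drawn from $\mathcal{P}_{s'}$ with $s' \neq s$ contributes to $n_{ks}^t$ precisely when the event $\mathcal{E}_t^{s',s}$ occurs, which by Lemma \ref{lm:p_epsilon} has probability at most $p_\epsilon$ under Assumptions \ref{as:convex_smooth} and \ref{as:init}. Summing these contributions gives
\begin{equation}
\mathbb{E}[n_{ks}^t] \;\leq\; n_{ks} + \sum_{s' \neq s} n_{ks'} \, p_\epsilon \;=\; n_{ks} + (n_k - n_{ks})\, p_\epsilon,
\end{equation}
so that dividing by $n_k$ and using $u_{ks} = n_{ks}/n_k$ yields $\mathbb{E}[n_{ks}^t/n_k] \leq (1-p_\epsilon)u_{ks} + p_\epsilon$.

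Finally, I would handle the smoother by the elementary inequality $\max\{x,\sigma\} \leq x + \sigma$ for $x \geq 0$, applied pointwise to $n_{ks}^t/n_k$. Taking expectations and substituting the previous bound gives $\mathbb{E}[u_{ks}^t] \leq (1-p_\epsilon)u_{ks} + p_\epsilon + \sigma$, which is exactly $(1-p_\epsilon)u_{ks} + p_\epsilon'$ by the definition $p_\epsilon' = p_\epsilon + \sigma$. The only real subtlety is conceptual rather than technical: Lemma \ref{lm:p_epsilon} must be invoked \emph{per data point} and summed across mis-attributions from all wrong clusters $s' \neq s$, which is why the $p_\epsilon$ coefficient multiplies $(1-u_{ks})$ rather than a naive $\sum_{s'\neq s}$ that double-counts; the step $(n_k - n_{ks}) p_\epsilon = n_k(1-u_{ks})p_\epsilon$ is what produces the exact form $(1-p_\epsilon)u_{ks}$ after rearrangement. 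No additional randomness beyond the sampling of the $n_k$ points needs to be tracked, since the centers $\{c_s^t\}$ are treated as fixed by the time the classification at round $t$ takes place.
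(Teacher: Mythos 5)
Your proof is correct and follows essentially the same route as the paper's: partition client $k$'s points by their true source distribution, bound the correctly-sourced points' contribution by $n_{ks}$ and the mis-attributed points' contribution by $p_\epsilon$ per point via Lemma \ref{lm:p_epsilon}, then absorb the smoother through the slack $\max\{x,\sigma\}\leq x+\sigma$. The paper phrases the decomposition in terms of virtual groups $G_{ki}$ of same-distribution points, but this is the identical argument; your explicit treatment of the $\max$ is if anything slightly cleaner than the paper's silent addition of $\sigma$.
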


Next, we seek to characterize each sub-problem $h_{ks}$ at the $\gamma_0$-inexact solution $w_k^{t+1}$ that approximately minimizes $h_k$.
Intuitively, $w_k^{t+1}$ should perform better for sub-problems with larger $u_{ks}$. On the other hand, if $u_{ks} = 0$, we generally cannot expect that $w_k^{t+1}$ will be close to $c_s^*$. We summarize this intuition in Theorems \ref{tm:sub_inexact} and \ref{tm:rDelta}.

\begin{theorem} \label{tm:sub_inexact} (Inexact solutions of sub-problems)
If $u_{ks} > 0$, and Assumptions \ref{as:inexact} to \ref{as:convex_smooth} hold, then
\begin{equation}
    \|\nabla h_{ks}(w_k^t; c_s^t) \| \leq \frac{\gamma}{\sqrt{u_{ks}}} \|\nabla F_s(c_s^t) \|
\end{equation}
where $\gamma = \sqrt{(\gamma_0^2 + \beta)L_\lambda / \mu_\lambda}$.
\end{theorem}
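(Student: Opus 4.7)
The plan is to convert the desired gradient bound on $\nabla h_{ks}(w_k^{t+1};c_s^t)$ into an upper bound on the function-value gap $h_{ks}(w_k^{t+1}) - h_{ks}(w_{ks}^*)$ via $L_\lambda$-smoothness, and then to control that gap by exploiting the identity $h_k = \sum_{s'} u_{ks'} h_{ks'}$ to shuttle information between the full proximal problem (where Assumption~\ref{as:inexact} applies) and the individual sub-problems (where Assumption~\ref{as:similarity} applies). Since $w_{ks}^*$ minimizes the $L_\lambda$-smooth function $h_{ks}$, the standard inequality $\|\nabla h_{ks}(w_k^{t+1})\|^2 \leq 2L_\lambda\bigl(h_{ks}(w_k^{t+1}) - h_{ks}(w_{ks}^*)\bigr)$ reduces the theorem to establishing
$$u_{ks}\bigl(h_{ks}(w_k^{t+1}) - h_{ks}(w_{ks}^*)\bigr) \leq \frac{\gamma_0^2 + \beta}{2\mu_\lambda}\|\nabla F_s(c_s^t)\|^2.$$

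The key algebraic step is the identity
$$u_{ks}\bigl(h_{ks}(w_k^{t+1}) - h_{ks}(w_{ks}^*)\bigr) = \bigl(h_k(w_k^{t+1}) - h_k(w_{ks}^*)\bigr) - \sum_{s'\neq s} u_{ks'}\bigl(h_{ks'}(w_k^{t+1}) - h_{ks'}(w_{ks}^*)\bigr),$$
which follows directly from $h_k = \sum_{s'} u_{ks'} h_{ks'}$. For the first bracket, I would insert and subtract $h_k(w_k^*)$: the piece $h_k(w_k^*) - h_k(w_{ks}^*)$ is non-positive because $w_k^*$ minimizes $h_k$, while the piece $h_k(w_k^{t+1}) - h_k(w_k^*)$ is bounded using strong convexity of $h_k$ (which is itself $\mu_\lambda$-strongly convex as a convex combination of $\mu_\lambda$-strongly convex $h_{ks'}$'s since $\sum_{s'} u_{ks'} = 1$) together with Assumption~\ref{as:inexact}, giving $h_k(w_k^{t+1}) - h_k(w_k^*) \leq \|\nabla h_k(w_k^{t+1})\|^2/(2\mu_\lambda) \leq \gamma_0^2\|\nabla F_s(c_s^t)\|^2/(2\mu_\lambda)$.

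For the subtracted sum, for each $s' \neq s$ I would invoke $h_{ks'}(w_k^{t+1}) \geq h_{ks'}(w_{ks'}^*)$ to write
$$-u_{ks'}\bigl(h_{ks'}(w_k^{t+1}) - h_{ks'}(w_{ks}^*)\bigr) \leq u_{ks'}\bigl(h_{ks'}(w_{ks}^*) - h_{ks'}(w_{ks'}^*)\bigr),$$
then apply the strong convexity bound $h_{ks'}(w_{ks}^*) - h_{ks'}(w_{ks'}^*) \leq \|\nabla h_{ks'}(w_{ks}^*)\|^2/(2\mu_\lambda)$, sum over $s' \neq s$, and pad in the missing $s' = s$ term for free (it vanishes since $\nabla h_{ks}(w_{ks}^*) = 0$). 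Noting that the regularizer's gradient vanishes at $c_s^t$ so that $\nabla h_{ks}(c_s^t;c_s^t) = \nabla F_s(c_s^t)$, Assumption~\ref{as:similarity} then caps the whole thing by $\beta\|\nabla F_s(c_s^t)\|^2/(2\mu_\lambda)$.

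Adding the two bounds yields the target control on $u_{ks}(h_{ks}(w_k^{t+1}) - h_{ks}(w_{ks}^*))$; substituting into the smoothness inequality, dividing by $u_{ks}$, and taking square roots finishes the argument with $\gamma = \sqrt{(\gamma_0^2 + \beta)L_\lambda/\mu_\lambda}$. I expect the most delicate step to be the treatment of the subtracted sum: the bound $h_{ks'}(w_k^{t+1}) \geq h_{ks'}(w_{ks'}^*)$ deliberately discards information about how close $w_k^{t+1}$ is to $w_{ks'}^*$, and recognizing that this lossy move is exactly what reshapes the residual into the form $\sum_{s'} u_{ks'}\|\nabla h_{ks'}(w_{ks}^*)\|^2$ that Assumption~\ref{as:similarity} is tailored to bound is the only insight that is not purely mechanical.
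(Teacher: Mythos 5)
Your proposal is correct and follows essentially the same route as the paper's proof: the decomposition $h_k=\sum_{s'}u_{ks'}h_{ks'}$, the strong-convexity (PL) bound from Assumption~\ref{as:inexact} on the full objective, the smoothness lower bound $h_{ks}(w_k^{t+1})-h_{ks}(w_{ks}^*)\geq\|\nabla h_{ks}(w_k^{t+1})\|^2/(2L_\lambda)$, and the lossy step $h_{ks'}(w_k^{t+1})\geq h_{ks'}(w_{ks'}^*)$ feeding into Assumption~\ref{as:similarity} are exactly the paper's chain of inequalities, merely organized backward from the target rather than forward from the hypothesis. The only discrepancy is the harmless time index ($w_k^{t+1}$ versus the $w_k^t$ in the theorem statement), an inconsistency already present in the paper itself.
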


\begin{theorem} \label{tm:rDelta} (Divergence of local model to centers)
If Assumptions \ref{as:inexact}, \ref{as:convex_smooth}, and \ref{as:init} hold, we have
\begin{equation}
    \|w_k^{t+1} - c_s^t \| \leq r \Delta, \forall s
\end{equation}
where $r = \frac{\gamma S + 1}{4} \sqrt{\frac{L_F}{\mu_F}} + \frac{1}{2} \sqrt{\frac{\mu_F}{L_F}} + 1$.
\end{theorem}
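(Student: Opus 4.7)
The plan is to control $\|w_k^{t+1}-c_s^t\|$ uniformly in $s$ by chaining triangle inequalities through the sub-problem minimizer associated with the heaviest component at client $k$. I would pick $s'\in\arg\max_j u_{kj}$, which guarantees $u_{ks'}\geq 1/S$ since $\sum_j u_{kj}=1$; this ensures $h_{ks'}$ is well-defined and that Theorem~\ref{tm:sub_inexact} yields a non-trivial bound in which the $1/\sqrt{u_{ks'}}$ factor is controlled by a constant depending only on $S$.

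The decomposition I would use is
\[
\|w_k^{t+1}-c_s^t\|\ \leq\ \|w_k^{t+1}-w_{ks'}^*\|\ +\ \|w_{ks'}^*-c_{s'}^*\|\ +\ \|c_{s'}^*-c_s^*\|\ +\ \|c_s^*-c_s^t\|,
\]
where $w_{ks'}^*$ minimizes $h_{ks'}(\cdot;c_{s'}^t)$. The third summand is at most $\Delta$ by definition, supplying the $+1$ in $r$. The fourth summand is at most $(0.5-\alpha_0)\sqrt{\mu_F/L_F}\,\delta\leq \tfrac12\sqrt{\mu_F/L_F}\,\Delta$ by Assumption~\ref{as:init} together with $\delta\leq\Delta$, supplying the $\tfrac12\sqrt{\mu_F/L_F}$ term.

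For the first summand I would invoke $\mu_\lambda$-strong convexity of $h_{ks'}$ together with the vanishing of $\nabla h_{ks'}$ at $w_{ks'}^*$ to obtain $\|w_k^{t+1}-w_{ks'}^*\|\leq \mu_\lambda^{-1}\|\nabla h_{ks'}(w_k^{t+1};c_{s'}^t)\|$, then apply Theorem~\ref{tm:sub_inexact} and $L_F$-smoothness of $F_{s'}$ to bound $\|\nabla F_{s'}(c_{s'}^t)\|\leq L_F\|c_{s'}^t-c_{s'}^*\|$, which is again controlled by Assumption~\ref{as:init}. The bound $u_{ks'}\geq 1/S$ (via the cruder $u_{ks'}^{-1/2}\leq u_{ks'}^{-1}\leq S$) is what contributes the linear $S$ factor. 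For the second summand I would use the optimality condition $\nabla F_{s'}(w_{ks'}^*)+\lambda\frac{u_{ks'}^t}{u_{ks'}}(w_{ks'}^*-c_{s'}^t)=0$ together with $\mu_F$-strong convexity of $F_{s'}$ to show $\|w_{ks'}^*-c_{s'}^*\|\leq \|c_{s'}^t-c_{s'}^*\|$, which Assumption~\ref{as:init} again bounds by $\tfrac12\sqrt{\mu_F/L_F}\,\Delta$. Collecting the two $\sqrt{L_F/\mu_F}\,\Delta$ contributions with $\mu_\lambda\geq\mu_F$ and the $(0.5-\alpha_0)\leq 1/2$ telescoping then yields the coefficient $(\gamma S+1)/4$.

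The main obstacle is the constant bookkeeping: one must juggle the two strong-convexity/smoothness pairs $(\mu_F,L_F)$ and $(\mu_\lambda,L_\lambda)$, absorb the $1/\sqrt{u_{ks'}}$ factor into $S$ at the right inequality, and harvest the $\tfrac12$ factors from Assumption~\ref{as:init} in the right places so that the two $\sqrt{L_F/\mu_F}$ contributions (one from Theorem~\ref{tm:sub_inexact}, one from $\|w_{ks'}^*-c_{s'}^*\|$) combine cleanly into $(\gamma S+1)/4$. A conceptual caveat is that Assumption~\ref{as:init} literally bounds only the initial center error, so its use at a general iteration $t$ should be read as a standing invariant that the enclosing convergence analysis will preserve inductively.
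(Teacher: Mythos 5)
Your proposal is correct and uses the same essential ingredients as the paper's proof --- choosing the dominant cluster $s'=\arg\max_j u_{kj}$ so that $u_{ks'}\geq 1/S$, invoking Theorem~\ref{tm:sub_inexact}, converting $\|\nabla F_{s'}(c_{s'}^t)\|$ to $L_F\|c_{s'}^t-c_{s'}^*\|$ by smoothness, and closing with Assumption~\ref{as:init} and $\delta\leq\Delta$ --- but the triangle-inequality chain is anchored differently. The paper decomposes through the \emph{current} centers, $\|w_k^{t+1}-c_s^t\|\leq\|w_k^{t+1}-c_{s'}^t\|+\|c_{s'}^t-c_s^t\|$, and bounds the first leg by strong convexity of $h_{ks'}$ applied to the gradient difference at $w_k^{t+1}$ and at $c_{s'}^t$, using that $\nabla h_{ks'}(c_{s'}^t;c_{s'}^t)=\nabla F_{s'}(c_{s'}^t)$ because the proximal term's gradient vanishes there. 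You instead route through the sub-problem minimizer $w_{ks'}^*$ and the optima $c_{s'}^*,c_s^*$, which obliges you to prove the extra step $\|w_{ks'}^*-c_{s'}^*\|\leq\|c_{s'}^t-c_{s'}^*\|$; this is true and follows as you say from the first-order condition $\nabla F_{s'}(w_{ks'}^*)=\lambda\frac{u_{ks'}^t}{u_{ks'}}(c_{s'}^t-w_{ks'}^*)$ together with $\mu_F$-strong convexity (one gets the sharper $\frac{\kappa}{\mu_F+\kappa}\|c_{s'}^t-c_{s'}^*\|$ with $\kappa=\lambda u_{ks'}^t/u_{ks'}$), so the detour costs nothing except that one extra lemma. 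One bookkeeping remark: with only $\alpha_0>0$ from Assumption~\ref{as:init}, your chain (and, in fact, the paper's own last inequality) naturally yields $\frac{\gamma S+1}{2}\sqrt{L_F/\mu_F}$ rather than $\frac{\gamma S+1}{4}$; the stated coefficient $1/4$ implicitly uses $0.5-\alpha_0\leq 1/4$, so the ``telescoping of the $1/2$ factors'' you describe cannot by itself produce the $1/4$. Your closing caveat --- that Assumption~\ref{as:init} is stated at $t=0$ but invoked at general $t$ as a standing invariant --- applies equally to the paper's proof and is addressed by the generalization in Appendix~\ref{sec:appendix_tau}.
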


Theorem \ref{tm:sub_inexact} indicates that if the $h_k$ is solved with high quality $w_k^{t+1}$ (small $\gamma_0$), and the sub-problems are sufficiently similar (small $\beta$), then sub-problems with $u_{ks} > 0$ can also be well solved by $w_k^{t+1}$. It also justifies using $v_{sk}^t \propto u_{ks}^t$ as aggregation weights in (\ref{eq:vskt}). In the case $u_{ks} = 0$, according to Theorem~\ref{tm:rDelta} (which holds for any $u_{ks}$), approaching $c_s^t$ with $w_k^t$ will introduce an error of at most $O(\Delta)$.

Finally, we show the convergence of $F_s(c_s^t)$. The following analysis does not depend on $\tau$; 
we show how $\tau$ affects the convergence in Appendix \ref{sec:appendix_tau}.

\begin{theorem} \label{tm:convergence} (Convergence of centers)
    Suppose Assumptions \ref{as:inexact} to \ref{as:init} hold, and define the quantities: $n \overset{\Delta}{=} \sum_k n_k, n_s \overset{\Delta}{=} \sum_{k} u_{ks}n_k, m_s \overset{\Delta}{=} \sum_{k, u_{ks} \neq 0}n_k, \Bar{m}_s \overset{\Delta}{=} \sum_{k, u_{ks} = 0} n_k, \Hat{m}_s \overset{\Delta}{=} (1-p_\epsilon)m_s + p_\epsilon'\sum_{k, u_{ks} \neq 0} \frac{n_k}{u_{ks}}$. Suppose $\lambda$ is chosen such that $\rho \overset{\Delta}{=} \frac{n_s - \gamma m_s}{\lambda} - \frac{(\gamma + 1)L_F m_s}{\mu_\lambda \lambda} - \frac{p_\epsilon' \Bar{m}_s}{2\mu_\lambda} - \frac{L_F(\gamma+1)^2\hat{m}_s}{2\mu_\lambda^2} - \frac{4L_F(\gamma+1)^2\hat{m}_s}{\mu_\lambda^2\sqrt{K}} - \frac{(\gamma+1)^2\hat{m}_s + (1-p_\epsilon)n_s + p_\epsilon' n}{\mu_\lambda\sqrt{2K}} > 0$ and denote $R \overset{\Delta}{=} \frac{1}{2}\left(\mu_\lambda + L_F \right)\Bar{m}_s r^2 + \frac{(4L_F + \mu_\lambda)\Bar{m}_sr^2}{\sqrt{K}}$. Then we have
    \begin{equation}
    \begin{split}
        &\mathbb{E}[F_s(c_s^{t+1})] - F_s(c_s^t) \\
        \leq& -  \frac{\rho \|\nabla F_s(c_s^t)\|^2}{(1-p_\epsilon)n_s + p_\epsilon' n} + \frac{p_\epsilon' R \Delta^2}{(1-p_\epsilon)n_s - p_\epsilon'(S-2) n}
    \end{split}
    \end{equation}
    at any time $t$, where the expectation is taken over the selection of clients and all $\{u_{ks}^t\}$.
\end{theorem}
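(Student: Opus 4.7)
The plan is to apply the standard descent lemma from $L_F$-smoothness of $F_s$, writing
\begin{equation*}
F_s(c_s^{t+1}) \le F_s(c_s^t) + \langle \nabla F_s(c_s^t),\, c_s^{t+1}-c_s^t \rangle + \tfrac{L_F}{2}\|c_s^{t+1}-c_s^t\|^2,
\end{equation*}
and then control each of the two fluctuating terms by taking expectations sequentially over the client sampling and over the random estimates $\{u_{ks}^t\}$. Because $c_s^{t+1}-c_s^t = \tfrac{1}{K}\sum_{k\in Sel_s^t}(w_k^{t+1}-c_s^t)$ with clients drawn according to $v_{sk}^t$, the conditional mean of this difference is $\sum_k v_{sk}^t(w_k^{t+1}-c_s^t)$, and the variance of the $K$-sample mean contributes the $1/\sqrt{K}$ factors visible in both $\rho$ and $R$.

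To handle the inner product term, I would use the optimality identity $\nabla h_{ks}(w_k^{t+1};c_s^t) = \nabla F_s(w_k^{t+1}) + \lambda\tfrac{u_{ks}^t}{u_{ks}}(w_k^{t+1}-c_s^t)$, which rearranges to express $w_k^{t+1}-c_s^t$ via the two gradients for every client with $u_{ks}>0$. Theorem~\ref{tm:sub_inexact} then bounds the $h_{ks}$-gradient by $\tfrac{\gamma}{\sqrt{u_{ks}}}\|\nabla F_s(c_s^t)\|$, and $L_F$-smoothness of $F_s$ together with Theorem~\ref{tm:rDelta} controls the difference $\nabla F_s(w_k^{t+1})-\nabla F_s(c_s^t)$. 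After substitution, the main negative contribution scales like $-\tfrac{1}{\lambda}\sum_{k:u_{ks}>0} u_{ks}^t n_k \cdot u_{ks}/u_{ks}^t \cdot \|\nabla F_s(c_s^t)\|^2$ (up to $\gamma$-dependent correction terms), which is where $n_s - \gamma m_s$ and the $(\gamma+1)L_F m_s/\mu_\lambda$ correction in $\rho$ will emerge. For clients with $u_{ks}=0$ that are falsely assigned $u_{ks}^t\ge\sigma$, the identity is unavailable, so I would bound $\|w_k^{t+1}-c_s^t\|$ directly by $r\Delta$ through Theorem~\ref{tm:rDelta}; these terms feed into $R\Delta^2$ and into the $\bar m_s$ pieces of $\rho$.

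Next I would take expectation over $\{u_{ks}^t\}$ using Theorem~\ref{tm:E[ukst]}. The subtle point is that $v_{sk}^t$ has $u_{ks}^t$ in both numerator and denominator; I would replace the numerators by their upper bound $(1-p_\epsilon)u_{ks}+p_\epsilon'$ from Theorem~\ref{tm:E[ukst]} and lower-bound the random denominator $\sum_{k'}u_{k's}^tn_{k'}$ by its expectation $(1-p_\epsilon)n_s+p_\epsilon' n$ (applicable when the errors on the off-cluster clients are pushed the other direction, yielding $(1-p_\epsilon)n_s - p_\epsilon'(S-2)n$ in the second term). This decouples the ratio and produces the denominator structure in the stated bound. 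Collecting the contributions from clients with $u_{ks}>0$ gives the $n_s$, $m_s$, and $\hat m_s$ terms in $\rho$; the squared-norm term from smoothness, bounded by $r^2\Delta^2$ on the off-cluster clients, produces $R$.

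The main obstacle is exactly this last step: carefully tracking how the two independent sources of randomness interact through the ratio defining $v_{sk}^t$, and how the smoothness quadratic $\|c_s^{t+1}-c_s^t\|^2$ splits across the $u_{ks}>0$ and $u_{ks}=0$ populations so that only the latter contributes to the irreducible $\Delta^2$ term. A secondary challenge is ensuring that, when Theorem~\ref{tm:sub_inexact}'s $1/\sqrt{u_{ks}}$ factor is combined with the aggregation weight $u_{ks}^t/\sum_{k'}u_{k's}^t n_{k'}$ and with Young's inequality (needed to convert cross terms into $\|\nabla F_s(c_s^t)\|^2$ and correction terms), the coefficient in front of $\|\nabla F_s(c_s^t)\|^2$ remains strictly positive so that $\rho>0$ is achievable for large enough $\lambda$. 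Beyond that, the proof reduces to bookkeeping of the quantities $n,n_s,m_s,\bar m_s,\hat m_s$ introduced in the theorem statement.
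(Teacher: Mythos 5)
Your plan follows essentially the same route as the paper's proof: the $L_F$-smoothness descent lemma applied to the aggregate update, the identity $\nabla h_{ks}(w_k^{t+1};c_s^t)=\nabla F_s(w_k^{t+1})+\lambda\frac{u_{ks}^t}{u_{ks}}(w_k^{t+1}-c_s^t)$ combined with Theorems~\ref{tm:sub_inexact} and~\ref{tm:rDelta} to split the on-cluster ($u_{ks}>0$) and off-cluster ($u_{ks}=0$, bounded by $r\Delta$) populations, Theorem~\ref{tm:E[ukst]} for the expectation over $\{u_{ks}^t\}$, and the variance of the $K$-sample mean for the $1/\sqrt{K}$ terms. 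The one step you describe loosely --- ``lower-bound the random denominator by its expectation'' --- is exactly the point the paper formalizes with a dedicated decoupling lemma ($\mathbb{E}[u_{ks}^t n_k/\sum_{k'}u_{k's}^t n_{k'}]\le\mathbb{E}[u_{ks}^t n_k]\,\mathbb{E}[1/\sum_{k'}u_{k's}^t n_{k'}]$, using independence of the clients' estimates), but you correctly identify it as the crux, so the proposal is sound.
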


\begin{corollary} \label{coro:convergence}
Suppose $F_s(c_s^0) - F_s^* = B_s$. After $T$ iterations,
\begin{equation}
\begin{split}
    &\sum_{t=1}^T \frac{\rho \mathbb{E}\|\nabla F_s(c_s^t)\|^2}{(1-p_\epsilon)n_s + p_\epsilon' n} \leq \frac{B_s}{T} + O(p_\epsilon \Delta^2)
\end{split}
\end{equation}
\end{corollary}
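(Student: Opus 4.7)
The plan is to convert the one-step descent inequality from Theorem~\ref{tm:convergence} into a telescoping sum, which is the standard route for turning a per-iteration progress bound into a non-convex-style convergence guarantee.

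First, I take total expectation on both sides of the inequality in Theorem~\ref{tm:convergence} (using the tower property over the randomness of client selection and of the importance-weight estimates) to obtain the unconditional recursion
\begin{equation*}
\mathbb{E}[F_s(c_s^{t+1})] - \mathbb{E}[F_s(c_s^t)] \leq -\frac{\rho\, \mathbb{E}\|\nabla F_s(c_s^t)\|^2}{(1-p_\epsilon)n_s + p_\epsilon' n} + \frac{p_\epsilon' R \Delta^2}{(1-p_\epsilon)n_s - p_\epsilon'(S-2) n}.
\end{equation*}

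Next, I sum this inequality over $t = 0, 1, \ldots, T-1$. The left-hand side telescopes to $\mathbb{E}[F_s(c_s^T)] - F_s(c_s^0)$. Using $F_s(c_s^T) \geq F_s^*$ and the hypothesis $F_s(c_s^0) - F_s^* = B_s$, this telescoped quantity is at least $-B_s$. Rearranging to isolate the sum of the normalized gradient norms leaves $B_s$ plus a linear-in-$T$ residual term on the right. Dividing through by $T$ (so that the initial-gap contribution becomes $B_s/T$) yields exactly the bound asserted in the corollary, where the residual constant
\begin{equation*}
\frac{p_\epsilon' R \Delta^2}{(1-p_\epsilon)n_s - p_\epsilon'(S-2) n}
\end{equation*}
collapses to $O(p_\epsilon \Delta^2)$ upon observing that $p_\epsilon' = p_\epsilon + \sigma$ with $\sigma$ a user-chosen small smoother, and that the denominator, $R$, and the data-count quantities $n, n_s, m_s, \bar{m}_s, \hat{m}_s$ are all independent of $T$.

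The argument has no real obstacle: the mechanics are purely a telescoping of the conditional-expectation bound already proved in Theorem~\ref{tm:convergence}. The only points requiring light care are (i) checking that the denominator $(1-p_\epsilon)n_s - p_\epsilon'(S-2) n$ is strictly positive so that the residual is well-defined, which is implied by the small-$p_\epsilon$ regime guaranteed by Assumption~\ref{as:init} together with a sufficiently small smoothing parameter $\sigma$, and (ii) absorbing $\sigma$ together with $p_\epsilon$ into the big-$O$ notation so that the error term is cleanly written as $O(p_\epsilon \Delta^2)$ rather than $O(p_\epsilon' \Delta^2)$.
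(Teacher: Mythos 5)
Your proposal is correct and is exactly the intended argument: the paper gives no separate proof of Corollary~\ref{coro:convergence}, and the standard telescoping of the one-step bound in Theorem~\ref{tm:convergence}, followed by $F_s(c_s^T)\ge F_s^*$ and division by $T$, is the only step needed. The one point worth flagging is that your derivation (correctly) produces the time-averaged quantity $\frac{1}{T}\sum_{t}$ on the left-hand side, whereas the corollary as printed omits the $\frac{1}{T}$ factor in front of the sum; this is evidently a typo in the statement, since an unnormalized sum over $T$ rounds cannot be bounded by $B_s/T$ plus a $T$-independent constant.
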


The choices of $\lambda$ to make $\rho > 0$ is discussed in \cite{fedprox}. From Corollary \ref{coro:convergence}, the gradient norm converges to a remaining error controlled by $p_\epsilon$. Intuitively, when $c_s^t = c_s^*$, further updating $c_s^t$ with misclassified models will inevitably move $c_s^t$ away from $c_s^*$. This bias cannot be removed unless we have a perfect guess of $u_{ks}$. Recall that $p_\epsilon = O(\frac{1}{\delta^4})$, and thus the remaining term is $O(\frac{\Delta^2}{\delta^4})$, which decreases as $\Delta$ approaches $\delta$. Thus, \textbf{FedSoft} performs better when the divergences between clusters are more homogeneous. Note that Corollary~\ref{coro:convergence} seems to imply the remaining error will explode if $\delta \rightarrow 0$, but Lemma~\ref{lm:p_epsilon} is only valid when $p_\epsilon < 1$. Thus when $\delta$ is very small, i.e., there exist two distributions that are extremely similar, the remaining error is determined by the maximum divergence of the other distributions. Furthermore, the divergence $\Delta$ determines the degree of non-IID of a local dataset (not among clients), which also implicitly affects the accuracy of local solutions $\gamma_0$. Intuitively, a larger $\Delta$ implies it is more difficult to exactly solve a local problem involving multiple distributions, resulting in a greater $\gamma_0$.

To see the role of cluster heterogeneity, suppose $\|c_1^* - c_2^*\|$ is closer than the distance of all other centers to $c_1$, then the misclassified samples for cluster 1 are more likely to be matched to cluster 2. Thus, cluster 2 gets more updates from data that it does not own, producing greater remaining training error that drags its center towards cluster 1. On the other hand, if the cluster divergence is homogeneous, then the effect of mis-classification is amortized among all clusters, resulting in a universally smaller remaining error.

Theorem \ref{tm:convergence} shows the convergence of cluster models $\{c_s\}$ in terms of the cluster risks $\{F_s\}$. For the local models $\{w_k\}$, we focus on how clients integrate global knowledge into their local personalizations, which cannot be captured only with the original client risk functions $\{f_k(w)\}$. 
Thus, we are interested in the convergence performance of $\{w_k\}$ with respect to the proximal objective $\{h_k\}$. Note that under Assumption \ref{as:convex_smooth}, \textbf{FedSoft} is effectively a cyclic block coordinate descent algorithm on a jointly convex objective function of $\{w_k\}$ and $\{c_s\}$, for which the convergence is guaranteed: 

\begin{theorem} \label{tm:joint_convergence} (Joint convergence of cluster and client models)
    For fixed importance weights $\Tilde{u}$, let $w^*,c^* = \text{argmin} \sum_{k=1}^N h_k(w_k;c,\Tilde{u}_k)$, and $w^T,c^T$ be the outputs of \textbf{FedSoft}. Then $w^T \rightarrow w^*, c^T \rightarrow c^*$ linearly with $T$.
\end{theorem}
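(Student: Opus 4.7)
The plan is to reinterpret FedSoft with fixed importance weights $\tilde u$ as exact block-coordinate descent (BCD) on a joint objective over the primal variables $(w,c)$. Define
\[
H(w,c) \overset{\Delta}{=} \sum_{k=1}^N n_k\, h_k(w_k; c, \tilde{u}_k) = \sum_{k} n_k f_k(w_k) + \frac{\lambda}{2}\sum_{k,s} n_k \tilde{u}_{ks}\|w_k - c_s\|^2.
\]
Minimization of $H$ over the $w$-block decouples into $N$ independent problems, each identical to the proximal local update $w_k^{t+1} = \text{argmin}_{w_k} h_k(w_k; c^t, \tilde u_k)$ performed on line 17. The first-order condition for the $c$-block yields the closed-form minimizer $c_s = \bigl(\sum_k n_k \tilde u_{ks} w_k\bigr)/\bigl(\sum_k n_k \tilde u_{ks}\bigr) = \sum_k v_{sk} w_k$, which matches (in expectation over client sampling) the server aggregation on line 18. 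Hence each communication round of FedSoft is an (expected) exact BCD sweep on $H$.

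The next step is to verify the regularity properties that drive linear convergence. Assumption \ref{as:convex_smooth} gives $\mu_F$-strong convexity and $L_F$-smoothness of each $F_s$, which propagate to $f_k$; the quadratic regularizer is jointly convex in $(w_k,c_s)$ and $\lambda$-smooth, so $H$ is jointly convex and $(L_F+\lambda)$-smooth. Each block sub-problem is moreover strongly convex: the $w$-update inherits strong convexity of modulus at least $\mu_F$ from $f_k$, while the $c_s$-update is $\lambda\bigl(\sum_k n_k\tilde u_{ks}\bigr)$-strongly convex, with a strictly positive modulus guaranteed by the $\sigma$-smoother in Algorithm \ref{algo:proposed}. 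These conditions fit the hypotheses of classical cyclic BCD convergence theorems (e.g.\ Beck and Tetruashvili, or Luo and Tseng), which yield a geometric contraction on $H(w^t,c^t)-H(w^*,c^*)$, and hence on $\|(w^t,c^t)-(w^*,c^*)\|$ via quadratic growth.

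Finally, one must account for the random client selection in FedSoft, since these off-the-shelf BCD theorems are stated for deterministic updates. The selection probabilities $v_{sk}^t$ are chosen precisely so that $K^{-1}\sum_{k\in Sel_s^t} w_k^{t+1}$ is an unbiased estimator of the exact $c$-block minimizer of $H$, and each $w_k^{t+1}$ is fully determined by $(c^t,\tilde u)$; thus the only stochasticity is in the sampling, whose variance is bounded in terms of the fixed quantities $\{n_k,\tilde u_{ks},K\}$. I would either (i) restrict the statement to full client participation, so that the deterministic BCD result applies directly, or (ii) embed the argument in the stochastic BCD framework (e.g.\ Nesterov), which preserves linear convergence in expectation under block strong convexity. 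The main obstacle I anticipate is the bookkeeping for option (ii): bounding the sampling variance uniformly across iterations and propagating it through the BCD descent inequality without losing the geometric rate. Once that is resolved, the linear convergence $w^T\to w^*$, $c^T\to c^*$ follows.
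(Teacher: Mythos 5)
Your proposal takes essentially the same route as the paper: both reinterpret \textbf{FedSoft} with fixed $\tilde{u}$ as cyclic block coordinate descent on the jointly convex sum of proximal objectives, observe that the $w$-block minimization is exactly the local proximal update and that the server averaging is exactly the first-order-optimal $c$-block update, and then invoke a standard linear-convergence result for BCD. If anything you are more careful than the paper, which silently assumes full client participation and does not spell out the block strong convexity (supplied by $\mu_F$ and the $\sigma$-smoothed weights $\tilde{u}_{ks}>0$) needed to upgrade plain convergence to a linear rate.
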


\section{Experiments} \label{sec:exp}
In this section, we verify the effectiveness of \textbf{FedSoft} with two base datasets under various mixture patterns. For all experiments, we use $N=100$ clients, and the number of samples in each client $n_k$ is chosen uniformly at random from 100 to 200. For ease of demonstration, for every base dataset, we first investigate the mixture of $S=2$ distributions and then increase $S$. In the case with two distributions, suppose the cluster distributions are named $\mathcal{D}_A$ and $\mathcal{D}_B$. We evaluate the following partition patterns:

\begin{itemize}
    \item 10:90 partition: 50 clients have a mixture of 10\% $\mathcal{D}_A$ and 90\% $\mathcal{D}_B$, and 50 have 10\% $\mathcal{D}_B$ and 90\% $\mathcal{D}_A$.
    \item 30:70 partition: Same as above except the ratio is 30:70.
    \item Linear partition: Client $k$ has $(0.5+k)$\% data from $\mathcal{D}_A$ and $(99.5-k)$\% data from $\mathcal{D}_B$, $k = 0,\cdots,99$.
\end{itemize}

We further introduce the random partition, where each client has a random mixture vector generated by dividing the $[0,1]$ range into $S$ segments with $S-1$ points drawn from $\text{Uniform}(0,1)$. We use all four partitions for $S=2$, and only use the random partition when $S>2$ for simplification. Each partition produces non-IID local distributions, i.e., clients have different local data distributions. Specifically, the 10:90 and 30:70 partitions yield 2 local distributions, while the linear and random partitions yield 100.  
Unless otherwise noted, we choose \textbf{FedSoft}'s estimation interval $\tau=2$, client selection size $K=60$, counter smoother $\sigma$ = 1e-4, and all experiments are run until both cluster and client models have fully converged. All models are randomly initialized with the Xavier normal \cite{xavier} initializer without pre-training, so that the association among clients, centers, and cluster distributions is built automatically during the training process. 


We compare \textbf{FedSoft} with two baselines: \textbf{IFCA} \cite{hard_cluster_ucb} and \textbf{FedEM} \cite{fmtl_mixture}. Both baseline algorithms produce one center for each cluster, but they do not explicitly generate local models as in \textbf{FedSoft}. Nevertheless, they also estimate the importance weights for each client, we thus use the center corresponding to the largest importance weight as a client's local model. Since we expect cluster models will be deployed to new users, we evaluate their test accuracy/error on holdout datasets sampled from the corresponding cluster distributions. For local models, they are expected to fit the local data of participating clients, we hence evaluate their accuracy/error on local training datasets. Throughout this section, we use $\Bar{c}$ and $\Bar{w}$ to represent the average accuracy/error of the cluster and client models, not the accuracy/error of the averaged models.

\begin{figure}[t]
\includegraphics[width=8cm]{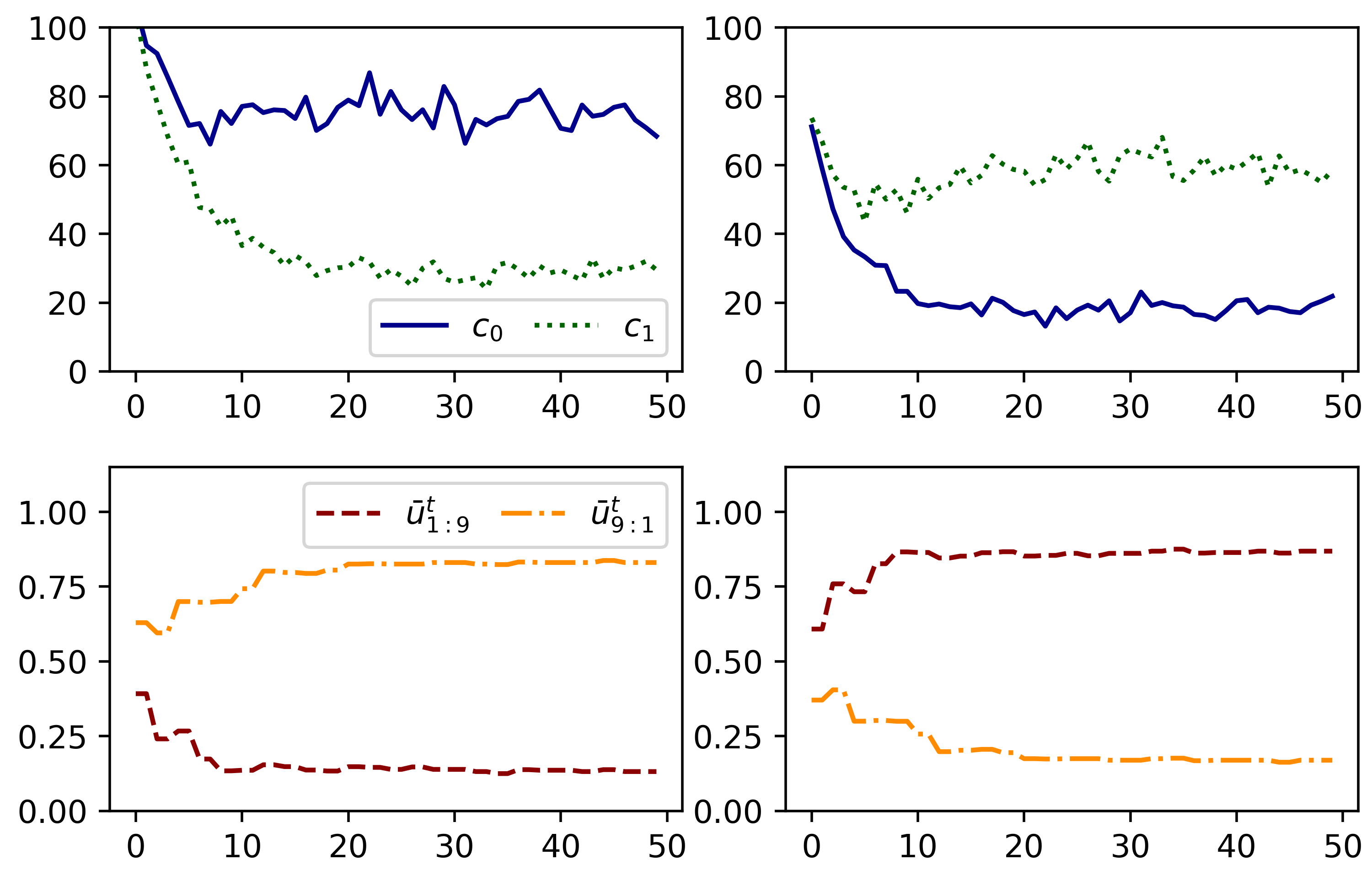}
\centering
\caption{Evolution of the test mean squared error of centers (top) and the importance weight estimation of clients (bottom) over time for the mixture of two synthetic distributions under the 10:90 partition. The left/right columns represent the first/second distributions. Center indices are assigned randomly in the beginning. The importance weight estimations $\Bar{u}_{a:b}^t$ are averaged on clients with the mixture coefficients $a:b$ (i.e., they have the same local distribution).\vspace{-0.8em}} \label{fig:synthetic_convergence}
\end{figure}

We use three datasets to generate the various distributions: Synthetic, EMNIST and CIFAR-10. Due to the space limit, we put the details of experiment parameters and CIFAR-10 results in Appendix \ref{sec:appendix_exp}.
\begin{itemize}
    \item \textbf{Synthetic Data}. We generate synthetic datasets according to $y_i = \langle x_i, \theta_s \rangle + \epsilon_i$ where $\theta_s \sim \mathcal{N}(0, \sigma_0^2 I_{10})$, $x_i \sim \mathcal{N}(0, I_{10})$, $\epsilon_i \sim \mathcal{N}(0, 1)$ \cite{hard_cluster_ucb}. Unless otherwise noted, we use $\sigma_0=10$. We use the conventional linear regression model for this dataset. 
    \item \textbf{EMNIST Letters}. We use the handwritten images of English letters in the EMNIST dataset to create 2 distributions for the lower and uppercase letters \cite{oneshot}, each with 26 classes. Then we rotate these images counterclockwise by $90\degree$ \cite{mnist_rotation}, resulting in 4 total distributions. In the $S=2$ setting we compare the two $0\degree$ distributions. A rotation variant CNN model is used for this dataset.
\end{itemize}

In general, the letter distributions share more similarities with each other, while the synthetic distributions are more divergent, e.g., letters like ``O'' have very similar upper and lowercase shapes and are invariant to rotations. 
On the other hand, data generated from $y=x$ and $y=-x$ can be easily distinguished. We thus expect the mixture of synthetic data to benefit more from the personalization ability of \textbf{FedSoft}.

The typical convergence process of \textbf{FedSoft} is shown in Figure \ref{fig:synthetic_convergence}. In this example of the synthetic data, \textbf{FedSoft} is able to automatically distinguish the two cluster distributions. After around 5 global epochs, center 1 starts to exhibit strength on the first cluster distribution, and center 0 concentrates on the other, which implies a correct association between centers and cluster distributions. Similarly, the importance weight estimations $u_{ks}^t$, which are initially around 50:50, soon converge to the real mixture ratio 10:90.

\begin{table}[t]
\centering
\caption{MSE or accuracy of cluster models for the mixture of two distributions. Each row represents the distribution of a test dataset. The center with the smallest error or highest accuracy is underlined for each test distribution.} \label{tab:centers_2partition}
\textbf{Synthetic data: mean squared error}
\vspace{0.3em}

\addtolength{\tabcolsep}{-1.2pt}
\begin{tabular}{ccccccccc}
\Xhline{2\arrayrulewidth}
\multirow{2}{*}{} & \multicolumn{2}{c}{10:90} & \multicolumn{2}{c}{30:70} & \multicolumn{2}{c}{Linear} & \multicolumn{2}{c}{Random} \\ \cline{2-9}
                  & $c_0$       & $c_1$       & $c_0$       & $c_1$       & $c_0$        & $c_1$       & $c_0$        & $c_1$       \\ \hline
$\theta_0$           & 68.4       & \underline{29.5}       & \underline{44.2}       & 49.6       & \underline{38.2}        & 59.1      & \underline{42.2}        & 60.6       \\
$\theta_1$          & \underline{21.8}       & 58.6       & 41.3       & \underline{36.3}       & 47.1        & \underline{27.8}      & 42.7        & \underline{27.0}       \\ \Xhline{2\arrayrulewidth}
\end{tabular}

\vspace{0.5em}

\textbf{EMNIST letters: accuracy (\%)}
\vspace{0.3em}

\addtolength{\tabcolsep}{-.8pt}
\begin{tabular}{ccccccccc}
\Xhline{2\arrayrulewidth}
\multirow{2}{*}{} & \multicolumn{2}{c}{10:90} & \multicolumn{2}{c}{30:70} & \multicolumn{2}{c}{Linear} & \multicolumn{2}{c}{Random} \\ \cline{2-9}
                  & $c_0$       & $c_1$       & $c_0$       & $c_1$       & $c_0$        & $c_1$       & $c_0$        & $c_1$       \\ \hline
Lower           & 68.9       & \underline{70.3}       & \underline{65.9}       & 65.8       & \underline{71.8}        & 71.7      & 72.0        & \underline{72.5}       \\
Upper          & \underline{74.6}       & 73.3       & 70.1       & \underline{70.4}       & 73.9        & \underline{74.1}      & \underline{77.7}        & 77.2       \\ \Xhline{2\arrayrulewidth}
\end{tabular}




\end{table}




\begin{table}[t]
\centering
\caption{Comparison between FedSoft and baselines on the letters data. $c_{lo}^*$/$c_{up}^*$ represents the accuracy of the center that performs best on the lower/upper distribution, and the number in the parenthesis indicates the index of that center. $\Bar{w}$ is the accuracy of local models averaged over all clients.} \label{tab:two_mixture_comparison}
\addtolength{\tabcolsep}{-2pt}
\begin{tabular}{ccccccc}
\Xhline{2\arrayrulewidth}
\multirow{2}{*}{} & \multicolumn{3}{c}{10:90} & \multicolumn{3}{c}{Linear} \\ \cline{2-7}
                  & $c_{lo}^*$       & $c_{up}^*$       & $\Bar{w}$       & $c_{lo}^*$       & $c_{up}^*$        & $\Bar{w}$       \\ \hline
FedSoft           & 70.3(1)       & 74.6(0)       & 90.9       & 71.8(0)       & 74.1(1)        & 86.5       \\
IFCA          & 58.5(0)       & 61.3(1)       & 65.2       & 55.4(0)       & 57.2(0)        & 62.9       \\
FedEM          & 67.4(1)       & 69.8(1)      & 63.6  & 65.9(0)       & 69.0(0)        & 62.4       \\\Xhline{2\arrayrulewidth}
\end{tabular}
\vspace{-1em}
\end{table}

\begin{figure}[t]
\includegraphics[width=8cm]{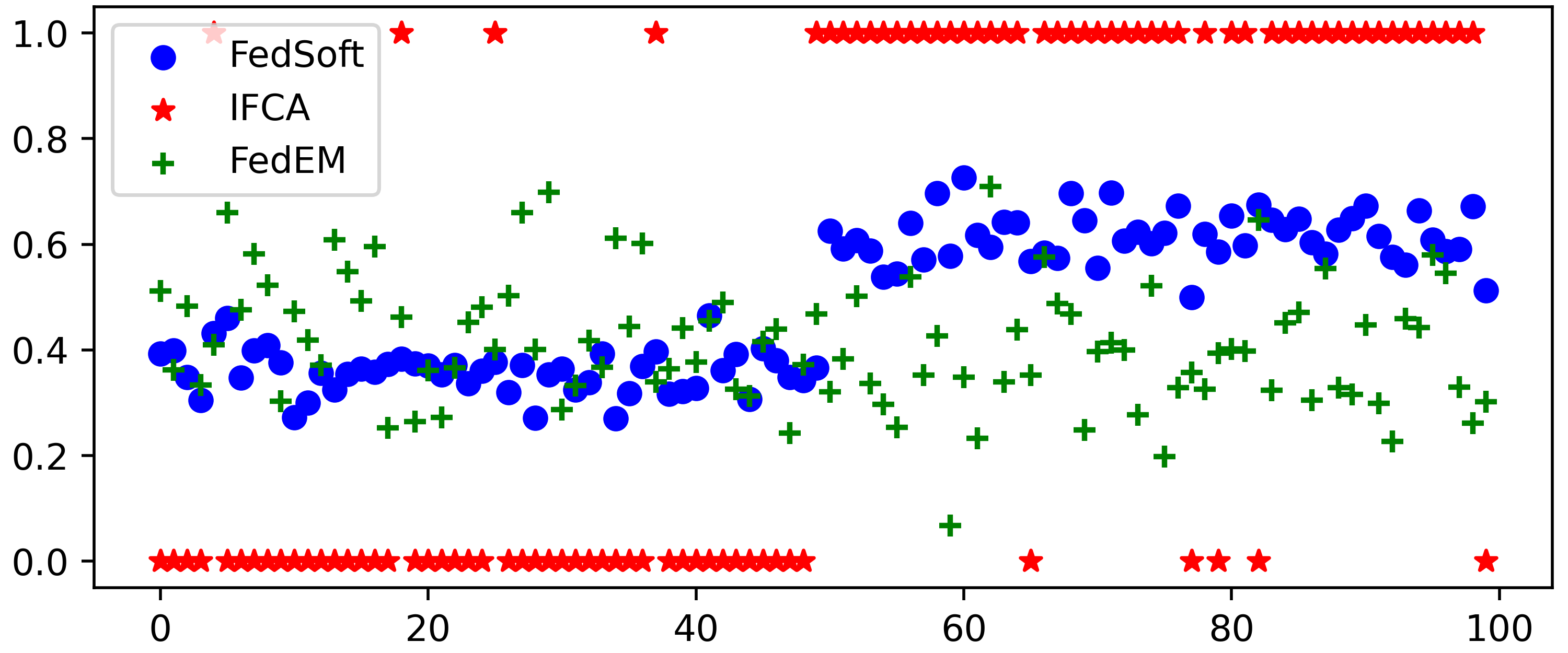}
\centering
\vspace{-0.8em}
\caption{The clients' estimation of importance weights on the first cluster ($u_{k0}^T$) under the 10:90 partition of the EMNIST dataset. The X axis is the index of clients and each point represents a client.} \label{fig:client_importance_estimation}
\end{figure}

Table \ref{tab:centers_2partition} lists the mean squared error (MSE) or accuracy of the output cluster models. \textbf{FedSoft} produces high quality centers under all mixture patterns. In particular, each center exhibits strength on one distribution, which indicates that \textbf{FedSoft} builds correct associations for the centers and cluster distributions. The performance gap between two distributions using the same center is larger for the synthetic data. This is because the letter distributions have smaller divergence than the synthetic distributions. Thus, letter models can more easily transfer the knowledge of one distribution to another, and a center focusing on one distribution can perform well on the other. Notably, the 30:70 mixture has the worst performance for both datasets, which is due to the degrading of local solvers when neither distribution dominates. Thus, the local problems under this partition are solved less accurately, resulting in poor local models and a large value of $\gamma$ in Theorem~\ref{tm:sub_inexact}, which then produces high training loss on cluster models according to Theorem \ref{tm:convergence}.

Table \ref{tab:two_mixture_comparison} compares \textbf{FedSoft} with the baselines. Not only does \textbf{FedSoft} produce more accurate cluster and local models, but it also achieves better balance between the two trained centers. Similarly, Figure 2 shows the importance estimation of clients for the first cluster. \textbf{FedSoft} and \textbf{IFCA} are able to build the correct association (though the latter is a hard partition), while \textbf{FedEM} appears to be biased to the other center by putting less weights ($<0.5$) on the first one.

Next, we evaluate the algorithm with the random partition for the mixture of more distributions. Tables \ref{tab:synthetic_mixture8} and \ref{tab:letters_mixture4} show the MSE or accuracy of cluster models for the mixture of 8 and 4 distributions on synthetic and letters data, where we still observe high-quality outcomes and good association between centers and cluster distributions.

\begin{table}[h]
\centering
\caption{Test MSE of centers for the mixture of 8 synthetic distributions with randomly generated weights $\theta_0 \cdots \theta_7$.} \label{tab:synthetic_mixture8}

\addtolength{\tabcolsep}{-3.2pt}
\label{tab:mnist_4cluster_accuracy}
\begin{tabular}{ccccccccc}
\Xhline{2\arrayrulewidth}
    & $c_0$ & $c_1$ & $c_2$ & $c_3$ & $c_4$ & $c_5$ & $c_6$ & $c_7$ \\ \hline
$\theta_0$   & 62.2 & 62.7 & 64.0 &  63.2 & 61.4 & \underline{57.6} & 63.7 & 61.9         \\
$\theta_1$   & 65.0 &  67.8 & 69.4 &  65.8 & 64.3 & 67.1 & \underline{64.2} & 64.5         \\
$\theta_2$   & 60.1 & 59.9 & \underline{57.8} & 58.6 & 62.6 & 63.2 & 60.0 & 59.9         \\
$\theta_3$   & 96.8 & 95.4 & 96.8 & 98.8 & \underline{93.2} & 93.8 & 95.3 & 96.8         \\
$\theta_4$   & 86.1 & 89.8 & 91.8 & 87.3 & 85.4 & 86.6 & 85.6 & \underline{84.9}        \\
$\theta_5$   & 161.2 & 160.3 & \underline{156.0} & 160.0 & 164.7 & 167.3 & 162.0 & 163.6 \\
$\theta_6$   & 110.2 & 106.7 & \underline{104.8} & 109.0 & 111.0 & 107.8 & 111.5 & 110.1         \\
$\theta_7$   & 34.5 & \underline{33.8} & 34.8 & 33.9 & 34.2 & 34.1 & 34.4 & 35.0  \\ \Xhline{2\arrayrulewidth}
\end{tabular}
\end{table}

\vspace{-1em}
\begin{table}[h]
\centering
\caption{Test accuracy (\%) of centers for the mixture of 4 distributions with original and $90\degree$-rotated letter images.} \label{tab:letters_mixture4}
\addtolength{\tabcolsep}{-2.5pt}
\begin{tabular}{ccccccccc}
\Xhline{2\arrayrulewidth}
\multirow{2}{*}{} & \multicolumn{2}{c}{$c_0$} & \multicolumn{2}{c}{$c_1$} & \multicolumn{2}{c}{$c_2$} & \multicolumn{2}{c}{$c_3$} \\ \cline{2-9}
                  & $0\degree$       & $90\degree$       & $0\degree$       & $90\degree$       & $0\degree$        & $90\degree$       & $0\degree$        & $90\degree$       \\ \hline
Lower           & 71.5       & \underline{67.6}       & 71.3       & 67.3       & 71.3        & \underline{67.6}      & \underline{72.3}        & 67.3       \\
Upper          & 70.2       & 71.7       & \underline{70.8}       & 71.3       & 70.3       & \underline{71.9}      & 70.3       & 71.0       \\ \Xhline{2\arrayrulewidth}
\end{tabular}
\end{table}

\section{Conclusion} \label{sec:conclusion}
This paper proposes \textbf{FedSoft}, an efficient algorithm generalizing traditional clustered federated learning approaches to allow clients to sample data from a mixture of distributions. By incorporating proximal local updating, \textbf{FedSoft} enables simultaneous training of cluster models for future users, and personalized local models for participating clients, which is achieved without increasing the workload of clients. Theoretical analysis shows the convergence of \textbf{FedSoft} for both cluster and client models, and the algorithm exhibits good performance in experiments with various mixture patterns.

\newpage
\section*{Acknowledgments}
This research was partially supported by NSF CNS-1909306 and CNS-2106891.

\bibliography{0-main.bib}

\begin{thebibliography}{23}
\providecommand{\natexlab}[1]{#1}

\bibitem[{Briggs et~al.(2020)}]{flhc}
Briggs, C.; et~al. 2020.
\newblock Federated learning with hierarchical clustering of local updates to
  improve training on non-IID data.
\newblock In \emph{2020 International Joint Conference on Neural Networks
  (IJCNN)}, 1--9. IEEE.

\bibitem[{Brisimi et~al.(2018)Brisimi, Chen, Mela, Olshevsky, Paschalidis, and
  Shi}]{medical_fl}
Brisimi, T.~S.; Chen, R.; Mela, T.; Olshevsky, A.; Paschalidis, I.~C.; and Shi,
  W. 2018.
\newblock Federated learning of predictive models from federated electronic
  health records.
\newblock \emph{International journal of medical informatics}, 112: 59--67.

\bibitem[{Dinh, Tran, and Nguyen(2020)}]{pfedme}
Dinh, C.~T.; Tran, N.~H.; and Nguyen, T.~D. 2020.
\newblock Personalized federated learning with moreau envelopes.
\newblock \emph{arXiv preprint arXiv:2006.08848}.

\bibitem[{Duan et~al.(2020)Duan, Liu, Ji, Liu, Liang, Chen, and Tan}]{fedgroup}
Duan, M.; Liu, D.; Ji, X.; Liu, R.; Liang, L.; Chen, X.; and Tan, Y. 2020.
\newblock FedGroup: Ternary Cosine Similarity-based Clustered Federated
  Learning Framework toward High Accuracy in Heterogeneous Data.
\newblock \emph{arXiv preprint arXiv:2010.06870}.

\bibitem[{Ghosh et~al.(2020)Ghosh, Chung, Yin, and
  Ramchandran}]{hard_cluster_ucb}
Ghosh, A.; Chung, J.; Yin, D.; and Ramchandran, K. 2020.
\newblock An efficient framework for clustered federated learning.
\newblock \emph{arXiv preprint arXiv:2006.04088}.

\bibitem[{Glorot and Bengio(2010)}]{xavier}
Glorot, X.; and Bengio, Y. 2010.
\newblock Understanding the difficulty of training deep feedforward neural
  networks.
\newblock In \emph{Proceedings of the thirteenth international conference on
  artificial intelligence and statistics}, 249--256. JMLR Workshop and
  Conference Proceedings.

\bibitem[{Guha, Talwalkar, and Smith(2019)}]{oneshot}
Guha, N.; Talwalkar, A.; and Smith, V. 2019.
\newblock One-shot federated learning.
\newblock \emph{arXiv preprint arXiv:1902.11175}.

\bibitem[{Hard et~al.(2018)Hard, Rao, Mathews, Ramaswamy, Beaufays, Augenstein,
  Eichner, Kiddon, and Ramage}]{next_word}
Hard, A.; Rao, K.; Mathews, R.; Ramaswamy, S.; Beaufays, F.; Augenstein, S.;
  Eichner, H.; Kiddon, C.; and Ramage, D. 2018.
\newblock Federated learning for mobile keyboard prediction.
\newblock \emph{arXiv preprint arXiv:1811.03604}.

\bibitem[{Huang et~al.(2019)Huang, Shea, Qian, Masurkar, Deng, and
  Liu}]{patient_cluster}
Huang, L.; Shea, A.~L.; Qian, H.; Masurkar, A.; Deng, H.; and Liu, D. 2019.
\newblock Patient clustering improves efficiency of federated machine learning
  to predict mortality and hospital stay time using distributed electronic
  medical records.
\newblock \emph{Journal of biomedical informatics}, 99: 103291.

\bibitem[{Huang et~al.(2021)Huang, Chu, Zhou, Wang, Liu, Pei, and
  Zhang}]{fedamp}
Huang, Y.; Chu, L.; Zhou, Z.; Wang, L.; Liu, J.; Pei, J.; and Zhang, Y. 2021.
\newblock Personalized cross-silo federated learning on non-iid data.
\newblock In \emph{Proceedings of the AAAI Conference on Artificial
  Intelligence}, volume~35, 7865--7873.

\bibitem[{Kairouz et~al.(2019)Kairouz, McMahan, Avent, Bellet, Bennis, Bhagoji,
  Bonawitz, Charles, Cormode, Cummings et~al.}]{advances}
Kairouz, P.; McMahan, H.~B.; Avent, B.; Bellet, A.; Bennis, M.; Bhagoji, A.~N.;
  Bonawitz, K.; Charles, Z.; Cormode, G.; Cummings, R.; et~al. 2019.
\newblock Advances and open problems in federated learning.
\newblock \emph{arXiv preprint arXiv:1912.04977}.

\bibitem[{Li et~al.(2018)Li, Sahu, Zaheer, Sanjabi, Talwalkar, and
  Smith}]{fedprox}
Li, T.; Sahu, A.~K.; Zaheer, M.; Sanjabi, M.; Talwalkar, A.; and Smith, V.
  2018.
\newblock Federated optimization in heterogeneous networks.
\newblock \emph{arXiv preprint arXiv:1812.06127}.

\bibitem[{Lopez-Paz and Ranzato(2017)}]{mnist_rotation}
Lopez-Paz, D.; and Ranzato, M. 2017.
\newblock Gradient episodic memory for continual learning.
\newblock \emph{Advances in neural information processing systems}, 30:
  6467--6476.

\bibitem[{Luo and Tseng(1992)}]{convergence_bcd}
Luo, Z.-Q.; and Tseng, P. 1992.
\newblock On the convergence of the coordinate descent method for convex
  differentiable minimization.
\newblock \emph{Journal of Optimization Theory and Applications}, 72(1): 7--35.

\bibitem[{Mansour et~al.(2020)Mansour, Mohri, Ro, and
  Suresh}]{three_approaches}
Mansour, Y.; Mohri, M.; Ro, J.; and Suresh, A.~T. 2020.
\newblock Three approaches for personalization with applications to federated
  learning.
\newblock \emph{arXiv preprint arXiv:2002.10619}.

\bibitem[{Marfoq et~al.(2021)Marfoq, Neglia, Bellet, Kameni, and
  Vidal}]{fmtl_mixture}
Marfoq, O.; Neglia, G.; Bellet, A.; Kameni, L.; and Vidal, R. 2021.
\newblock Federated Multi-Task Learning under a Mixture of Distributions.
\newblock \emph{International Workshop on Federated Learning for User Privacy
  and Data Confidentiality in Conjunction with ICML 2021 (FL-ICML'21)}.

\bibitem[{McMahan et~al.(2017)McMahan, Moore, Ramage, Hampson, and
  y~Arcas}]{fedavg}
McMahan, B.; Moore, E.; Ramage, D.; Hampson, S.; and y~Arcas, B.~A. 2017.
\newblock Communication-efficient learning of deep networks from decentralized
  data.
\newblock In \emph{Artificial Intelligence and Statistics}, 1273--1282. PMLR.

\bibitem[{Qayyum et~al.(2021)Qayyum, Ahmad, Ahsan, Al-Fuqaha, and
  Qadir}]{covid}
Qayyum, A.; Ahmad, K.; Ahsan, M.~A.; Al-Fuqaha, A.; and Qadir, J. 2021.
\newblock Collaborative federated learning for healthcare: Multi-modal covid-19
  diagnosis at the edge.
\newblock \emph{arXiv preprint arXiv:2101.07511}.

\bibitem[{Reynolds(2009)}]{gaussian_mixture}
Reynolds, D.~A. 2009.
\newblock Gaussian mixture models.
\newblock \emph{Encyclopedia of biometrics}, 741: 659--663.

\bibitem[{Sattler, M{\"u}ller, and Samek(2020)}]{cfl}
Sattler, F.; M{\"u}ller, K.-R.; and Samek, W. 2020.
\newblock Clustered federated learning: Model-agnostic distributed multitask
  optimization under privacy constraints.
\newblock \emph{IEEE transactions on neural networks and learning systems}.

\bibitem[{Sim, Zadrazil, and Beaufays(2019)}]{fine_tune}
Sim, K.~C.; Zadrazil, P.; and Beaufays, F. 2019.
\newblock An investigation into on-device personalization of end-to-end
  automatic speech recognition models.
\newblock \emph{arXiv preprint arXiv:1909.06678}.

\bibitem[{Smith et~al.(2017)Smith, Chiang, Sanjabi, and Talwalkar}]{fmtl}
Smith, V.; Chiang, C.-K.; Sanjabi, M.; and Talwalkar, A. 2017.
\newblock Federated multi-task learning.
\newblock \emph{arXiv preprint arXiv:1705.10467}.

\bibitem[{Xie et~al.(2021)Xie, Long, Shen, Zhou, Wang, Jiang, and
  Zhang}]{multi-center}
Xie, M.; Long, G.; Shen, T.; Zhou, T.; Wang, X.; Jiang, J.; and Zhang, C. 2021.
\newblock Multi-center federated learning.
\newblock \emph{arXiv preprint arXiv:2108.08647}.

\end{thebibliography}
\newpage
\appendix
\onecolumn
\section{Proof of Theorems}
\subsection{Proof of Theorem \ref{tm:E[ukst]}} 
Let $G_{ki}, i=1,2\cdots$ be some virtual group of client $k$'s data points that follow the same distribution. Thus,
\begin{proof}
\begin{equation}
\begin{split}
    \mathbb{E} [u_{ks}^t] \leq& \frac{1}{n_k} \left(\sum_{G_{ki} \sim \mathcal{P}_s} \mathbb{P}\left(\text{argmin}_{s'}F_{s'}(G_{ki}) = s\right) |G_{ki}| + \sum_{G_{ki} \not\sim \mathcal{P}_{s}} \mathbb{P}\left(\text{argmin}_{s'}F_{s'}(G_{ki}) = s\right) |G_{ki}|\right) + \sigma \\
    \leq& \frac{1}{n_k} \left(n_{ks} + \sum_{G_{ki} \not\sim \mathcal{P}_{s}} p_\epsilon |G_{ki}|\right) = \frac{1}{n_k} \left(n_{ks} + p_\epsilon (n_k - n_{ks})\right) + \sigma = (1-p_\epsilon)u_{ks} + p_\epsilon'
\end{split}
\end{equation}
\end{proof}

\subsection{Proof of Theorem \ref{tm:sub_inexact}}
\begin{proof}
For simplification we drop the dependency of $c_s^t$ in $h_k$ and $h_{ks}$. Take $w_{ks}^* = \text{argmin}_{w_{ks}} h_{ks}(w_{ks})$, we have
\begin{equation}
\begin{split}
    &\gamma_0^2 \| \nabla F_s(c_s^t) \|^2 \geq \|\nabla h_k(w_k^t) \|^2 \geq 2\mu_\lambda \left(h_k(w_k^t) - h_k(w_{ks}^*)\right) \\
    =& 2\mu_\lambda \sum_{s'} u_{ks'} \left(h_{ks'}(w_k^t) - h_{ks'}(w_{ks}^*)\right) = 2\mu_\lambda u_{ks} \left(h_{ks}(w_k^t) - h_{ks}^*\right) - 2\mu_\lambda  \sum_{s' \neq s} u_{ks'} \left(h_{ks'}(w_{ks}^*) - h_{ks'}(w_k^t)\right) \\
    \geq& \frac{\mu_\lambda}{L_\lambda} u_{ks} \|\nabla h_{ks}(w_k^t) \|^2 - \sum_{s' \neq s} u_{ks'} \|\nabla h_{ks'}(w_{ks}^*) \|^2 \geq \frac{\mu_\lambda}{L_\lambda} u_{ks} \|\nabla h_{ks}(w_k^t) \|^2 - \beta \|\nabla F_s(c_s^t) \|^2
\end{split}
\end{equation}

Thus,

\begin{equation}
    \frac{\mu_\lambda}{L_\lambda} u_{ks} \|\nabla h_{ks}(w_k^t) \|^2  \leq \left(\gamma_0^2 + \beta\right) \|\nabla F_s(c_s^t) \|^2
\end{equation}

Reorganizing, we have
\begin{equation}
    \|\nabla h_{ks}(w_k^t)\| \leq \frac{\sqrt{(\gamma_0^2 + \beta)L_\lambda / \mu_\lambda}}{\sqrt{u_{ks}}} \|\nabla F_s(c_s^t)\| = \frac{\gamma}{\sqrt{u_{ks}}} \|\nabla F_s(c_s^t)\|
\end{equation}
\end{proof}

\subsection{Proof of Theorem \ref{tm:rDelta}}
\begin{proof}
Let $s' = \text{argmax}_s u_{ks}$, we have $u_{ks'} \geq \frac{1}{S}$, thus
\begin{equation}
\begin{split}
     &\|w_k^{t+1} - c_s^t \| = \|w_k^{t+1} - c_{s'}^t + c_{s'}^t - c_s^t \| \\
     \leq& \frac{1}{\mu_\lambda} \|\nabla h_{ks'}(w_k^{t+1};c_{s'}^t) - \nabla h_{ks'}(c_{s'}^t;c_{s'}^t) \| + \|c_{s'}^t - c_s^t \| \\
     \leq& \frac{1}{\mu_F} \left(\frac{\gamma}{u_{ks'}}\|\nabla F_{s'}(c_{s'}^t) \| + \|\nabla F_{s'}(c_{s'}^t) \|\right) + \frac{1}{2} \sqrt{\frac{\mu_F}{L_F}} \delta + \Delta \\
     \leq& \frac{(\gamma S + 1)L_F}{\mu_F} \|c_{s'}^t - c_{s'}^* \| + \left(\frac{1}{2} \sqrt{\frac{\mu_F}{L_F}} + 1\right)\Delta \\
     \leq& \left(\frac{\gamma S + 1}{4} \sqrt{\frac{L_F}{\mu_F}} + \frac{1}{2} \sqrt{\frac{\mu_F}{L_F}} + 1\right) \Delta
\end{split}
\end{equation}
\end{proof}

\subsection{Proof of Theorem \ref{tm:convergence}}
We first introduce the following lemma:

\begin{lemma} \label{lm:exp_ind}
$\mathbb{E}\left[\frac{u_{ks}^t n_k}{\sum_{k'}u_{k's}^t n_{k'}}\right] \leq \mathbb{E}[u_{ks}^tn_k]\mathbb{E} \left[\frac{1}{\sum_{k'} u_{k's}^tn_{k'}}\right]$, where the expectation is taken over $\{u_{ks}^t\}$.
\end{lemma}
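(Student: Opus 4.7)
The plan is to isolate client $k$'s contribution to the denominator and exploit negative association between the numerator and the reciprocal of the denominator. Set $X \overset{\Delta}{=} u_{ks}^t n_k$ and $Y \overset{\Delta}{=} \sum_{k' \neq k} u_{k's}^t n_{k'}$, so that the ratio under consideration is $X/(X+Y)$. Because each client computes its own importance weights using only its private dataset and the commonly broadcast centers $\{c_s^t\}$ (which, conditional on the history up to round $t$, are deterministic), the random variables $X$ and $Y$ are independent. The smoother $\sigma>0$ guarantees $u_{k's}^t \geq \sigma$, so $X+Y>0$ almost surely and every quantity below is well-defined.

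With this decomposition, for any fixed value of $Y$ the map $x \mapsto x$ is non-decreasing while $x \mapsto 1/(x+Y)$ is non-increasing on $x \geq 0$, so the two factors inside the expectation are anti-monotone functions of the single random variable $X$. Chebyshev's association (sum) inequality then yields
\begin{equation}
\mathbb{E}_X\!\left[\frac{X}{X+Y}\right] = \mathbb{E}_X\!\left[X\cdot\frac{1}{X+Y}\right] \leq \mathbb{E}_X[X]\cdot \mathbb{E}_X\!\left[\frac{1}{X+Y}\right].
\end{equation}
Taking expectation over $Y$, noting that $\mathbb{E}_X[X]$ is a constant in $Y$, and invoking Fubini together with the independence of $X$ and $Y$,
\begin{equation}
\mathbb{E}\!\left[\frac{X}{X+Y}\right] \leq \mathbb{E}[X]\cdot \mathbb{E}_Y\mathbb{E}_X\!\left[\frac{1}{X+Y}\right] = \mathbb{E}[u_{ks}^t n_k]\cdot \mathbb{E}\!\left[\frac{1}{\sum_{k'} u_{k's}^t n_{k'}}\right],
\end{equation}
which is exactly the claimed bound.

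The main obstacle I anticipate is not the inequality itself but verifying the independence hypothesis: when Lemma \ref{lm:exp_ind} is applied inside the proof of Theorem \ref{tm:convergence}, one must condition on the history up through iteration $t$ so that the $\{u_{k's}^t\}_{k'}$ are genuinely mutually independent across clients, with the only remaining randomness being each client's local sampling noise in forming its estimate. Under that conditioning, the argument reduces to a single invocation of Chebyshev's association inequality and requires no further computation.
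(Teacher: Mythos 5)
Your proof is correct, and it shares the paper's setup --- the same decomposition of the denominator into $X = u_{ks}^t n_k$ and $Y = \sum_{k'\neq k} u_{k's}^t n_{k'}$ (the paper calls the latter $r^t$) and the same appeal to independence of clients' estimates --- but the key inequality is obtained differently. The paper conditions on $r^t$ and applies Jensen twice: once using the concavity of $x \mapsto x/(x+r)$ to get $\mathbb{E}_X[X/(X+r)] \leq \mathbb{E}[X]/(\mathbb{E}[X]+r)$, and once using the convexity of $x \mapsto 1/(x+r)$ to get $1/(\mathbb{E}[X]+r) \leq \mathbb{E}_X[1/(X+r)]$. You instead observe that, conditional on $Y$, the two factors $x$ and $1/(x+Y)$ are anti-monotone functions of the single random variable $X$ and invoke Chebyshev's association inequality in one shot. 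The two arguments prove the same conditional bound and then integrate over $Y$ identically; yours is arguably cleaner since it needs only monotonicity rather than the specific curvature of the two maps, while the paper's is more elementary in that it uses nothing beyond Jensen. Your closing caveat about conditioning on the history so that the centers $\{c_s^t\}$ are deterministic and the $\{u_{k's}^t\}$ are independent across clients is well taken --- the paper asserts this independence without comment, and it is exactly the hypothesis both proofs need.
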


\begin{proof}
Let $r^t = \sum_{k' \neq k} u_{k's}^tn_{k'}$, and note that $u_{ks}^t \perp r^t$ since each client estimates its $u_{ks}^t$ independently. Thus,

\begin{equation}
\begin{split}
    &\mathbb{E}\left[\frac{u_{ks}^tn_k}{\sum_{k'}u_{k's}^tn_{k'}}\right] = \mathbb{E}\left[\frac{u_{ks}^tn_k}{u_{ks}^tn_k + r^t}\right] = \mathbb{E}\left[1 - \frac{r^t}{u_{ks}^tn_k + r^t}\right] \\
    =& \mathbb{E}_{r^t}\left[\mathbb{E}_{\{u_{ks}^t\}|r^t}\left[1 - \frac{r^t}{u_{ks}^tn_k + r^t}\right] \right] \leq \mathbb{E}_{r^t}\left[1 - \frac{r^t}{\mathbb{E}[u_{ks}^tn_k|r^t] + r^t}\right] \\
    =& \mathbb{E}[u_{ks}^tn_k]\mathbb{E}_{r^t}\left[\frac{1}{\mathbb{E}[u_{ks}^tn_k] + r^t}\right] \leq \mathbb{E}[u_{ks}^tn_k]\mathbb{E}_{r^t}\left[\mathbb{E}_{\{u_{ks}^t\}|r^t}\left[\frac{1}{u_{ks}^tn_k + r^t}\right]\right] \\
    =& \mathbb{E}[u_{ks}^tn_k]\mathbb{E}\left[\frac{1}{\sum_{k'}u_{k's}^tn_{k'} }\right]
\end{split}
\end{equation}
\end{proof}

We then formally prove Theorem \ref{tm:convergence}:
\begin{proof}
For $u_{ks} \neq 0$, define
\begin{equation}
    e_{ks}^{t+1} \overset{\Delta}{=} \nabla h_{ks}(w_k^{t+1};c_s^t) = \nabla F_s(w_k^{t+1}) + \lambda \frac{u_{ks}^t}{u_{ks}} (w_k^{t+1} - c_s^t)
\end{equation}

using Theorem \ref{tm:sub_inexact}, we have
\begin{equation}
    \|e_{ks}^{t+1}\| \leq \frac{\gamma}{\sqrt{u_{ks}}} \|\nabla F_s(c_s^t) \| \leq \frac{\gamma}{u_{ks}} \|\nabla F_s(c_s^t) \|
\end{equation}

Define
\begin{equation}
    \Bar{c}_s^{t+1} \overset{\Delta}{=} \sum_k v_{sk}^t w_k^{t+1} = \mathbb{E}_{v_{sk}^t}[w_k^{t+1}]
\end{equation}

thus,
\begin{equation}
\begin{split}
    \Bar{c}_s^{t+1} - c_s^t =& -\frac{1}{\lambda} \sum_{k, u_{ks} \neq 0} \left(\frac{u_{ks}n_k}{\sum_{k'} u_{k's}^tn_{k'}} \nabla F_s(w_k^{t+1}) - \frac{u_{ks}n_k}{\sum_{k'} u_{k's}^tn_{k'}} e_{ks}^{t+1} \right) + \sum_{k, u_{ks} = 0} v_{sk}^t (w_{k}^{t+1} - c_s^t)
\end{split}
\end{equation}

Next we bound $\|w_k^{t+1} - c_s^t \|$ for $u_{ks} \neq 0$,
\begin{equation}
\begin{split}
    &\|w_k^{t+1} - c_s^t\| \leq \frac{1}{\mu_\lambda} \|\nabla h_{ks}(w_k^{t+1};c_s^t) - \nabla h_{ks}(c_s^t;c_s^t) \| \\
    \leq& \left(\frac{\gamma}{\mu_\lambda \sqrt{u_{ks}}} + \frac{1}{\mu_\lambda} \right) \| \nabla F_s(c_s^t)\| \leq \frac{\gamma + 1}{\mu_\lambda \sqrt{u_{ks}}}\|\nabla F_s(c_s^t)\| \leq \frac{\gamma + 1}{\mu_\lambda u_{ks}}\|\nabla F_s(c_s^t)\|
\end{split}
\end{equation}

Therefore,
\begin{equation} \label{eq:|cs_-cs|}
\begin{split}
    &\|\Bar{c}_s^{t+1} - c_s^t\|^2 \leq \mathbb{E}_{v_{sk}^t}[\|w_k^{t+1} - c_s^t\|^2]\\
    \leq& \left(\frac{\gamma + 1}{\mu_\lambda}\right)^2 \sum_{k, u_{ks} \neq 0} \frac{u_{ks}^tn_k}{u_{ks} \sum_{k'} u_{k's}^tn_{k'}} \|\nabla F_s(c_s^t)\|^2  + \sum_{k, u_{ks = 0}} \frac{ u_{ks}^t n_k}{\sum_{k'} u_{k's}^tn_{k'}} r^2 \Delta^2
\end{split}
\end{equation}

Define $M_{s}^{t+1}$ such that $\Bar{c}_s^{t+1} - c_s^t = -\frac{1}{\lambda}\left(\frac{\sum_k u_{ks}n_k}{\sum_k u_{ks}^tn_k} \nabla F_s(c_s^t) + M_s^{t+1}\right)$

\begin{equation}
\begin{split}
    M_s^{t+1} =& \sum_{k, u_{ks} \neq 0} \left(\frac{u_{ks}n_k}{\sum_{k'} u_{k's}^tn_{k'}} \nabla F_s(w_k^{t+1}) - \frac{u_{ks}n_k}{\sum_{k'} u_{k's}^tn_{k'}} e_{ks}^{t+1} \right) - \frac{\sum_k u_{ks}n_k}{\sum_k u_{ks}^tn_k}\nabla F_s(c_s^t) - \lambda\sum_{k, u_{ks} = 0} v_{sk}^t (w_k^{t+1} - c_s^t ) \\
    =& \frac{1}{\sum_k u_{ks}^tn_k} \sum_{k, u_{ks} \neq 0} \left( u_{ks}n_k \left(\nabla F_s(w_k^{t+1}) - \nabla F_s(c_s^t)\right) - u_{ks}n_k e_{ks}^{t+1} \right) - \lambda\sum_{k, u_{ks} = 0} v_{sk}^t (w_k^{t+1} - c_s^t )
\end{split}
\end{equation}

thus,
\begin{equation}
\begin{split}
    &\| M_{t+1} \| \leq \frac{1}{\sum_k u_{ks}^tn_k} \sum_{k, u_{ks} \neq 0} \left(u_{ks}n_kL_F\|w_k^{t+1} - c_s^t\| + u_{ks}n_k \|e_{ks}^{t+1}\|\right) + \lambda \sum_{k, u_{ks} = 0} v_{sk}^t \|w_k^{t+1} - c_s^t \| \\
    \leq& \frac{m_s}{\sum_k u_{ks}^tn_k} \left(\frac{(\gamma + 1)L_F}{\mu_\lambda} + \gamma\right)\|\nabla F_s(c_s^t)\| + \lambda\sum_{k, u_{ks = 0}} \frac{ u_{ks}^tn_k}{\sum_{k'} u_{k's}^tn_{k'}} r \Delta
\end{split}
\end{equation}

Using the smoothness of $F_s$, we have
\begin{equation}
\begin{split}
    &F_s(\Bar{c}_s^{t+1}) \leq F_s(c_s^t) + \langle\nabla F_s(c_s^t), \Bar{c}_s^{t+1} - c_s^t \rangle + \frac{L_F}{2} \|\Bar{c}_s^{t+1} - c_s^t \|^2 \\
    \leq& F_s(c_s^t) - \frac{1}{\lambda} \frac{\sum_k u_{ks}n_k}{\sum_k u_{ks}^tn_k} \|\nabla F_s(c_s^t)\|^2 - \frac{1}{\lambda} \langle\nabla F_s(c_s^t), M_s^{t+1} \rangle + \frac{L_F}{2} \|\Bar{c}_s^{t+1} - c_s^t \|^2 \\
    \leq& F_s(c_s^t) - \frac{1}{\lambda} \frac{n_s}{\sum_k u_{ks}^tn_k} \|\nabla F_s(c_s^t)\|^2 + \frac{m_s}{\lambda \sum_k u_{ks}^tn_k} \left(\frac{ (\gamma + 1)L_F}{\mu_\lambda} + \gamma \right)\|\nabla F_s(c_s^t)\|^2 \\
    +& \left(\sum_{k, u_{ks = 0}} \frac{u_{ks}^tn_k}{\sum_{k'} u_{k's}^tn_{k'}} r \Delta\right)\|\nabla F_s(c_s^t)\| + \frac{L_F}{2}\left(\frac{\gamma + 1}{\mu_\lambda}\right)^2 \sum_{k, u_{ks} \neq 0} \frac{u_{ks}^tn_k}{u_{ks} \sum_{k'} u_{k's}^tn_{k'}} \|\nabla F_s(c_s^t)\|^2 \\
    +& \frac{L_F}{2}\sum_{k, u_{ks = 0}} \frac{ u_{ks}^tn_k}{\sum_{k'} u_{k's}^tn_{k'}} r^2 \Delta^2 \\
    =& F_s(c_s^t) - \frac{1}{\sum_k u_{ks}^tn_k}\left(\frac{n_s - \gamma m_s}{\lambda} - \frac{(\gamma + 1)L_F m_s}{\mu_\lambda \lambda}\right)\|\nabla F_s(c_s^t)\|^2 + \left(\sum_{k, u_{ks = 0}} \frac{ u_{ks}^tn_k}{\sum_{k'} u_{k's}^tn_{k'}} r \Delta\right)\|\nabla F_s(c_s^t)\| \\
    +& \frac{L_F}{2}\left(\frac{\gamma + 1}{\mu_\lambda}\right)^2 \sum_{k, u_{ks \neq 0}} \frac{u_{ks}^tn_k}{u_{ks} \sum_{k'} u_{k's}^tn_{k'}} \|\nabla F_s(c_s^t)\|^2 + \frac{L_F}{2}\sum_{k, u_{ks = 0}} \frac{ u_{ks}^tn_k}{\sum_{k'} u_{k's}^tn_{k'}} r^2 \Delta^2
\end{split}
\end{equation}

Taking expectations over $\{u_{ks}^t\}$, and applying Lemma \ref{lm:exp_ind}, we have
\begin{equation}
\begin{split}
    &\mathbb{E}[F_s(\Bar{c}_s^{t+1})] \leq F_s(c_s^t) - \mathbb{E}\left[\frac{1}{\sum_k u_{ks}^tn_k}\right]\Bigg\{\left(\frac{n_s - \gamma m_s}{\lambda} - \frac{(\gamma + 1)L_F m_s}{\mu_\lambda \lambda}\right)\|\nabla F_s(c_s^t)\|^2\\
    -& \left( p_\epsilon' \Bar{m}_s r \Delta\right)\|\nabla F_s(c_s^t)\| - \frac{L_F}{2}\left(\frac{\gamma + 1}{\mu_\lambda}\right)^2 \hat{m}_s\|\nabla F_s(c_s^t)\|^2 - \frac{L_F}{2}p_\epsilon' \Bar{m}_s r^2 \Delta^2\Bigg\} \\
    \leq& F_s(c_s^t) -  \mathbb{E}\left[\frac{1}{\sum_k u_{ks}^tn_k}\right]\left(\frac{n_s - \gamma m_s}{\lambda} - \frac{(\gamma + 1)L_F m_s}{\mu_\lambda \lambda} - \frac{p_\epsilon' \Bar{m}_s}{2\mu_\lambda} - \frac{L_F(\gamma+1)^2\hat{m}_s}{2\mu_\lambda^2}\right)\|\nabla F_s(c_s^t)\|^2 \\
    +& \mathbb{E}\left[\frac{1}{\sum_k u_{ks}^tn_k}\right]\frac{p_\epsilon'}{2}\left(\mu_\lambda + L_F \right)\Bar{m}_s r^2\Delta^2 
\end{split}
\end{equation}

Define 
\begin{equation}
    \rho_0 \overset{\Delta}{=} \frac{n_s - \gamma m_s}{\lambda} - \frac{(\gamma + 1)L_F m_s}{\mu_\lambda \lambda} - \frac{p_\epsilon' \Bar{m}_s}{2\mu_\lambda} - \frac{L_F(\gamma+1)^2\hat{m}_s}{2\mu_\lambda^2}
\end{equation}
\begin{equation}
    R_0 \overset{\Delta}{=} \frac{1}{2}\left(\mu_\lambda + L_F \right)\Bar{m}_s r^2
\end{equation}

then
\begin{equation} \label{eq:noselection_convergence}
\begin{split}
    &\mathbb{E}[F_s(\Bar{c}_s^{t+1})] \leq F_s(c_s^t) -  \mathbb{E}\left[\frac{1}{\sum_k u_{ks}^tn_k}\right]\rho_0\|\nabla F_s(c_s^t)\|^2 + \mathbb{E}\left[\frac{1}{\sum_k u_{ks}^tn_k}\right] p_\epsilon' R_0 \Delta^2
\end{split}
\end{equation}

Next we incorporate the client selection.

We can write $c_s^{t+1} = \frac{1}{K} \sum_{l=1}^K \hat{c}_{s,l}^{t+1}$, where $\hat{c}_{s,l}^{t+1} = \sum_{k=1}^N \boldsymbol{1}(k \in Sel_{s,l}^t) w_k^{t+1}$, and
\begin{equation}
    \mathbb{P}(k \in Sel_{s,l}^t) = \frac{u_{ks}^tn_k}{\sum_{k'} u_{k's}^tn_{k'}} = v_{sk}^t
\end{equation}

Thus,
\begin{equation}
    \mathbb{E}_{Sel_s^t}[\hat{c}_{s,l}^{t+1}] = \mathbb{E}_{{v_{sk}^t}}[w_k^{t+1}]
\end{equation}

We then bound the difference between $F_s(c_s^{t+1})$ and $F_s(\Bar{c}_s^{t+1})$
\begin{equation} \label{eq:Qst}
\begin{split}
    &F_s(c_s^{t+1}) \leq F_s(\Bar{c}_s^{t+1}) + \langle \nabla F_s(\Bar{c}_s^{t+1}), c_s^{t+1} - \Bar{c}_s^{t+1}\rangle + \frac{L_F}{2} \|c_s^{t+1} - \Bar{c}_s^{t+1}\|^2 \\
    \leq& F_s(\Bar{c}_s^{t+1}) + \left( \| \nabla F_s(\Bar{c}_s^{t+1}) \|  + \frac{L_F}{2} \|c_s^{t+1} - \Bar{c}_s^{t+1}\| \right) \| c_s^{t+1} - \Bar{c}_s^{t+1} \| \\
    \leq& F_s(\Bar{c}_s^{t+1}) + \left( \| \nabla F_s(\Bar{c}_s^{t+1}) - \nabla F_s(c_s^t)\| + \|\nabla F_s(c_s^t) \|  + \frac{L_F}{2}\left( \|c_s^{t+1} - c_s^t\| + \|\Bar{c}_s^{t+1} - c_s^t \|\right) \right) \| c_s^{t+1} - \Bar{c}_s^{t+1} \| \\
    \leq& F_s(\Bar{c}_s^{t+1}) + \left( \|\nabla F_s(c_s^t) \| + L_F\|\Bar{c}_s^{t+1} - c_s^t \|  + \frac{L_F}{2}\left( \|c_s^{t+1} - c_s^t\| + \|\Bar{c}_s^{t+1} - c_s^t \|\right) \right) \| c_s^{t+1} - \Bar{c}_s^{t+1} \| \\
    =& F_s(\Bar{c}_s^{t+1}) + \underbrace{\left( \|\nabla F_s(c_s^t) \|  + \frac{L_F}{2}\|c_s^{t+1} - c_s^t\| + \frac{3L_F}{2} \|\Bar{c}_s^{t+1} - c_s^t \| \right) \| c_s^{t+1} - \Bar{c}_s^{t+1} \|}_{Q_s^t}.
\end{split}
\end{equation}

Thus, we only need to bound $\mathbb{E}[Q_s^t]$
\begin{equation}
\begin{split}
    &\mathbb{E}_{Sel_s^t}[Q_s^t] = \left(\|\nabla F_s(c_s^t) \| + \frac{3L_F}{2} \|\Bar{c}_s^{t+1} - c_s^t \| \right) \mathbb{E}_{Sel_s^t}[\| c_s^{t+1} - \Bar{c}_s^{t+1} \|] + \frac{L_F}{2}\mathbb{E}_{Sel_s^t}[\|c_s^{t+1} - c_s^t\| \cdot \| c_s^{t+1} - \Bar{c}_s^{t+1} \|] \\
    \leq& \left(\|\nabla F_s(c_s^t) \| + 2L_F \|\Bar{c}_s^{t+1} - c_s^t \| \right) \mathbb{E}_{Sel_s^t}[\| c_s^{t+1} - \Bar{c}_s^{t+1} \|] + \frac{L_F}{2}\mathbb{E}_{Sel_s^t}[\| c_s^{t+1} - \Bar{c}_s^{t+1} \|^2] \\
    \leq& \left(\|\nabla F_s(c_s^t) \| + 2L_F \|\Bar{c}_s^{t+1} - c_s^t \| \right) \sqrt{\mathbb{E}_{Sel_s^t}[\| c_s^{t+1} - \Bar{c}_s^{t+1} \|^2]} + \frac{L_F}{2}\mathbb{E}_{Sel_s^t}[\| c_s^{t+1} - \Bar{c}_s^{t+1} \|^2]
\end{split}
\end{equation}

Note that $\mathbb{E}_{v_s^t}[w_k^{t+1}] = \Bar{c}_s^{t+1}$, we have
\begin{equation}
\begin{split}
    &\mathbb{E}_{Sel_s^t}[\| c_s^{t+1} - \Bar{c}_s^{t+1} \|^2] = \frac{1}{K^2}\mathbb{E}_{Sel_{s,l}^t}\left[\left\|\sum_{l=1}^K\left(\hat{c}_{s,l}^{t+1} - \Bar{c}_s^{t+1}\right) \right\|^2\right] \leq \frac{2}{K^2}\sum_{l=1}^K\mathbb{E}_{Sel_{s,l}^t}\left[\left\|\hat{c}_{s,l}^{t+1} - \Bar{c}_s^{t+1}\right\|^2\right] \\
    =& \frac{2}{K}\mathbb{E}_{{v_{sk}^t}}\left[\left\|w_k^{t+1} - \Bar{c}_s^{t+1}\right\|^2\right] = \frac{2}{K} \mathbb{E}_{{v_{sk}^t}} \left[\left\|w_k^{t+1} - c_s^t \|^2 - 2\langle w_k^{t+1} - c_s^t, \Bar{c}_s^{t+1} - c_s^t \rangle + \| \Bar{c}_s^{t+1} - c_s^t\right\|^2\right] \\
    =& \frac{2}{K} \mathbb{E}_{{v_{sk}^t}} \left[\left\|w_k^{t+1} - c_s^t \|^2 - \| \Bar{c}_s^{t+1} - c_s^t\right\|^2\right] \leq \frac{2}{K} \mathbb{E}_{{v_{sk}^t}} \left[\left\|w_k^{t+1} - c_s^t \right\|^2 \right]
\end{split}
\end{equation}

Combining with (\ref{eq:|cs_-cs|}), we can obtain
\begin{equation}
\begin{split}
    &\mathbb{E}_{Sel_s^t}[Q_s^t] \leq \sqrt{\frac{2}{K}} \sqrt{\mathbb{E}_{{v_{sk}^t}} \left[\left\|w_k^{t+1} - c_s^t \right\|^2 \right]} \|\nabla F_s(c_s^t) \| + \left(2L_F\sqrt{\frac{2}{K}} + \frac{L_F}{K}\right) \mathbb{E}_{{v_{sk}^t}} \left[\left\|w_k^{t+1} - c_s^t \right\|^2 \right] \\
    \leq& \frac{1}{2}\sqrt{\frac{2}{K}}\left( \mu_\lambda \mathbb{E}_{{v_{sk}^t}} \left[\left\|w_k^{t+1} - c_s^t \right\|^2 \right] + \frac{1}{\mu_\lambda} \|\nabla F_s(c_s^t) \|^2 \right) + \left(2L_F\sqrt{\frac{2}{K}} + \frac{L_F}{K}\right) \mathbb{E}_{{v_{sk}^t}} \left[\left\|w_k^{t+1} - c_s^t \right\|^2 \right]
\end{split}
\end{equation}

where
\begin{equation}
\begin{split}
    &\mathbb{E}\left[\mathbb{E}_{v_{sk}^t}[\|w_k^{t+1} - c_s^t\|^2]\right] \leq \mathbb{E}\left[\frac{1}{\sum_{k} u_{ks}^tn_{k}}\right]\left\{\left(\frac{\gamma + 1}{\mu_\lambda}\right)^2 \hat{m}_s\|\nabla F_s(c_s^t)\|^2 + p_\epsilon' \Bar{m}_s r^2\Delta^2\right\}
\end{split}
\end{equation}

hence,
\begin{equation} \label{eq:E[Qst]}
\begin{split}
    \mathbb{E}[Q_s^t] \leq& \frac{1}{\mu_\lambda}\sqrt{\frac{1}{2K}}\left((\gamma + 1)^2\hat{m}_s\mathbb{E}\left[\frac{1}{\sum_k u_{ks}^t n_k}\right] + 1\right) \|\nabla F_s(c_s^t) \|^2  \\
    +& \underbrace{\left(2L_F\sqrt{\frac{2}{K}} + \frac{L_F}{K} + \frac{\mu_\lambda}{2} \sqrt{\frac{2}{K}}\right)}_{\leq \frac{4L_F+\mu_\lambda}{\sqrt{K}}} \mathbb{E}\left[\frac{1}{\sum_k u_{ks}^t n_k}\right] p_\epsilon' \Bar{m}_s r^2\Delta^2 \\
    +& \underbrace{\left(2L_F\sqrt{\frac{2}{K}} + \frac{L_F}{K} \right)}_{ \leq \frac{4L_F}{\sqrt{K}}} \mathbb{E}\left[\frac{1}{\sum_k u_{ks}^t n_k}\right] \left(\frac{\gamma + 1}{\mu_\lambda}\right)^2 \hat{m}_s \|\nabla F_s(c_s^t)\|^2
\end{split}
\end{equation}

Combining (\ref{eq:noselection_convergence}), (\ref{eq:Qst}), and (\ref{eq:E[Qst]}) we have
\begin{equation}
\begin{split}
    &\mathbb{E}\left[F_s(c_s^{t+1})\right] \leq \mathbb{E}\left[F_s(\Bar{c}_s^{t+1})\right] + \mathbb{E}[Q_s^t] \\
    \leq& F_s(c_s^t) -  \mathbb{E}\left[\frac{1}{\sum_k u_{ks}^tn_k}\right]\left(\rho_0 - \frac{4L_F(\gamma+1)^2\hat{m}_s}{\mu_\lambda^2\sqrt{K}} - \frac{(\gamma+1)^2\hat{m}_s}{\mu_\lambda\sqrt{2K}} \right)\|\nabla F_s(c_s^t)\|^2 + \frac{1}{\mu_\lambda\sqrt{2K}}\|\nabla F_s(c_s^t)\|^2 \\
    +& \mathbb{E}\left[\frac{1}{\sum_k (1 - \sum_{s' \neq s}u_{ks}^t)n_k}\right]p_\epsilon' \left(R_0 + \frac{(4L_F + \mu_\lambda)\Bar{m}_sr^2}{\sqrt{K}}\right) \Delta^2 \\
\end{split}
\end{equation}

Let $\lambda$ be chosen such that $\rho_0 - \frac{4L_F(\gamma+1)^2\hat{m}_s}{\mu_\lambda^2\sqrt{K}} - \frac{(\gamma+1)^2\hat{m}_s}{\mu_\lambda\sqrt{2K}} > 0$, thus

\begin{equation}
    \mathbb{E}\left[F_s(c_s^{t+1})\right] \leq F_s(c_s^t) -  \frac{\left(\rho_0 - \frac{4L_F(\gamma+1)^2\hat{m}_s}{\mu_\lambda^2\sqrt{K}} - \frac{(\gamma+1)^2\hat{m}_s + (1-p_\epsilon)n_s + p_\epsilon' n}{\mu_\lambda\sqrt{2K}} \right)}{(1-p_\epsilon)n_s + p_\epsilon' n}\|\nabla F_s(c_s^t)\|^2 + \frac{p_\epsilon' \left(R_0 + \frac{(4L_F + \mu_\lambda)\Bar{m}_sr^2}{\sqrt{K}}\right) \Delta^2}{(1-p_\epsilon)n_s - p_\epsilon'(S-2) n}
\end{equation}
\end{proof}

\subsection{Proof of Theorem \ref{tm:joint_convergence}}
Note that under Assumption \ref{as:convex_smooth}, the sum of proximal objectives $h(w_1 \cdots w_N,c_1 \cdots c_S) = \sum_{k=1}^N h_k(w_k;c,\Tilde{u}_k)$ is jointly convex on $(w_1\cdots w_N, c_1\cdots c_S)$ for fixed $\Tilde{u}_k$, and the training process of \textbf{FedSoft} can be regarded as a cyclic block coordinate descent algorithm that sequentially updates $w_1\cdots w_N, c_1\cdots c_S$ while other blocks are fixed. This type of algorithm is know to converge at least linearly to a stationary point \cite{convergence_bcd}. 

To see why the averaging of centers correspond to the minimization of them, simply set the gradients to zero
\begin{equation}
    \nabla_{c_s} h = \sum_{k=1}^N \Tilde{u}_{ks} \left(c_s - w_k\right) = 0
\end{equation}

This implies the optimal $c_s^*$ equals
\begin{equation}
    c_s^* = \sum_k \frac{\Tilde{u}_{ks} w_k}{\sum_k \Tilde{u}_{ks}}
\end{equation}

which is exactly the updating rule of \textbf{FedSoft}.

\section{The impact of $\tau$ on the convergence} \label{sec:appendix_tau}
Note that $\tau$ only affects the accuracy of the importance weight estimations $u_{ks}^t$, which determines the estimation error $p_\epsilon$.

To incorporate $\tau$ into the analysis, we first generalize Assumption \ref{as:init} as follows \cite{hard_cluster_ucb}
\begin{equation}
    \|c_s^t - c_s^* \| \leq \left(0.5 - \alpha_t \right)\sqrt{\mu_F / L_F} \delta, \forall s
\end{equation}
where $0 < \alpha_t \leq 0.5$ for all $t$.

If the algorithm works correctly, the distance between $c_s^t$ and $c_s^*$ should decrease over time, thus $\alpha_t$ will gradually increase from $\alpha_0$ to $0.5$. Then we can change the definition of $p_\epsilon$ in Lemma \ref{lm:p_epsilon}:
\begin{equation}
    \mathbb{P}(\mathcal{E}_t^{j,j'}) \leq p_\epsilon^{t,\tau} \overset{\Delta}{=} \frac{c_\epsilon}{\alpha_{\tau\lfloor t/\tau\rfloor}^2\delta^4} 
\end{equation}
Here we replace $\alpha_0$ with $\alpha_{\tau\lfloor t/\tau\rfloor}$, which takes the same value within each estimation interval. Since we expect $\alpha_t$ to be increasing, we have $\alpha_{\tau\lfloor t/\tau\rfloor} \leq \alpha_t$. Thus, the estimation error $p_\epsilon^{t,\tau}$ increases when we choose a larger interval $\tau$. Plugging this new definition of $p_\epsilon^{t,\tau}$ to Corollary \ref{coro:convergence} we have
\begin{equation}
\begin{split}
    &\sum_{t=1}^T \frac{\rho^{t,\tau} \mathbb{E}\|\nabla F_s(c_s^t)\|^2}{(1-p_\epsilon^{t,\tau})n_s + (p_\epsilon^{t,\tau})' n} \leq \frac{B_s}{T} + O(p_\epsilon^{0,\tau} \Delta^2)
\end{split}
\end{equation}
where $\rho^{t,\tau}$ is defined by replacing all $p_\epsilon$ with $p_\epsilon^{t,\tau}$. This gives us the same asymptotic convergence rate with time as in Corollary \ref{coro:convergence}, except for small differences in constant terms.

\newpage
\section{Experiment Details} \label{sec:appendix_exp}
\subsection{Experiment parameters}
\begin{itemize}
    \item \textbf{Synthetic data}. We use a conventional linear regression model without the intercept term. All clients use Adam as the local solver. The number of local epochs equals 10, batch size equals 10, and the initial learning rate equals 5e-3. The same solver is used for both \textbf{FedSoft} and the baselines. The regularization weight $\lambda = 1.0$ for \textbf{FedSoft}. The training lasts for 50 global epochs.
    \item \textbf{Letters data}. We use a CNN model comprising two convolutional layers with kernel size equal to 5 and padding equal to 2, each followed by the max-pooling with kernel size equal to 2, then connected to a fully-connected layer with 512 hidden neurons followed by ReLU. All clients use Adam as the local solver, with the number of local epochs equal to 5, batch size equal to 5, and initial learning rate equal to 1e-5. The same solver is used for both \textbf{FedSoft} and the baselines. The regularization weight $\lambda = 0.1$ for \textbf{FedSoft}. The training lasts for 200 global epochs.
\end{itemize}

\subsection{Impact of regularization weight $\lambda$}
In previous experiments on the letters dataset, we choose $\lambda=0.1$, which is selected through grid search. We show in Table \ref{tab:letters_varying_lambda} the accuracy of cluster and client models for different choices of $\lambda$ on the letters dataset, while all other parameters are kept unchanged. As we can see, when $\lambda=0$, no global knowledge is passed to clients, thus the local training is done separately without any cooperation, resulting in poorly trained models. On the other hand, when $\lambda$ is increased to 1, the local updating is dominated by fitting the local model to the average of global models, and the local knowledge is less emphasized, which also reduces the algorithm performance.

\begin{table}[h!]
\centering
\caption{Accuracy of cluster and client models for different choices of $\lambda$ for the linear partition of lower and uppercase letters. All the other parameters are kept unchanged.} \label{tab:letters_varying_lambda}

\addtolength{\tabcolsep}{-.5pt}
\begin{tabular}{cccccccccc}
\Xhline{2\arrayrulewidth}
\multirow{2}{*}{} & \multicolumn{3}{c}{$\lambda=0$} & \multicolumn{3}{c}{$\lambda=0.1$} & \multicolumn{3}{c}{$\lambda=1$} \\ \cline{2-10}
                  & $c_0$       & $c_1$       & $\Bar{w}$       & $c_0$       & $c_1$        & $\Bar{w}$       & $c_0$        & $c_1$      & $\Bar{w}$       \\ \hline
Lower           & 48.4       & \underline{50.1}       & -       & \underline{71.8}       &  71.7       & -      & 55.1     & \underline{55.2}  & -       \\
Upper          & \underline{47.7}       & 47.5       & -       & 73.9       &  \underline{74.1}       &  -     & \underline{58.7}       & 58.6   & -       \\
Local          & -       & -       & 72.7       & -       &  -       &  86.5     & -       & -   & 63.6       \\
\Xhline{2\arrayrulewidth}
\end{tabular}

\end{table}

\subsection{Impact of divergence among distributions $\Delta$}
Finally, we show how the divergence among different distributions $\Delta$ affects the performance of \textbf{FedSoft}. For this experiment we use the synthetic dataset, and we control the divergence by choosing different values of $\sigma_0$ (i.e. $\Delta$ increases with $\sigma_0$). As we can see, the MSE significantly increases as the divergence between distributions gets larger, which validates Theorem \ref{tm:convergence}.

\begin{table}[h!]
\centering
\caption{MSE of cluster and client models for different choices of $\sigma_0$ for the mixture of two synthetic distributions under the random partition. All the other parameters are kept unchanged.} \label{tab:letters_varying_lambda}

\addtolength{\tabcolsep}{-.5pt}
\begin{tabular}{ccccccccccccc}
\Xhline{2\arrayrulewidth}
\multirow{2}{*}{} & \multicolumn{3}{c}{$\sigma_0=1$} & \multicolumn{3}{c}{$\sigma_0=10$} & \multicolumn{3}{c}{$\sigma_0=50$} & \multicolumn{3}{c}{$\sigma_0=100$} \\ \cline{2-13}
                  & $c_0$       & $c_1$       & $\Bar{w}$       & $c_0$       & $c_1$        & $\Bar{w}$       & $c_0$        & $c_1$      & $\Bar{w}$     &  $c_0$        & $c_1$      & $\Bar{w}$      \\ \hline
$\theta_0$           & 5.9       & \underline{4.2}       & -       & \underline{42.2}       &  60.6       & -      & 225.3     & \underline{89.9}  & -   & 782.4     & \underline{454.4}  & -       \\
$\theta_1$          & \underline{3.2}       & 4.6       & -       & 42.7       &  \underline{27.0}       &  -     & \underline{117.6}       & 89.9   & -   & \underline{432.8}       & 812.0   & -       \\
Local          & -       & -       & 0.6       & -       &  -       &  4.96     & -       & -   & 18.8   & -       & -   & 78.3       \\
\Xhline{2\arrayrulewidth}
\end{tabular}
\end{table}

\subsection{CIFAR-10 Results}
In this section we evaluate the performance of \textbf{FedSoft} for the CIFAR-10 dataset. We consider two data distributions: the original CIFAR-10 images and their rotation by 90\degree counterclockwise. All images are preprocesseed with standard data augmentation tools \cite{hard_cluster_ucb}. We use a CNN model with six convolutional layers, whose channel sizes are sequentially 32, 64, 128, 128, 256, 256. Each convolutional layer follows the ReLU activation and every two convolutional layer follows a max pool layer. The fully connected layer has dimension 1024$\times$512.

20 clients are used for this experiment. We choose client selection size $K=15$, importance estimation interval $\tau=2$, regularization weight $\lambda=0.01$. The local solvers are Adam with initial learning rate equals 5e-4, number of local epochs equals 10, batch size equals 64. The training lasts for 200 global epochs.
\newpage

\begin{table}[t!]
\centering
\caption{Accuracy of cluster models for the mixture of two CIFAR-10 distributions. Each row represents the distribution of a test dataset. The center with the highest accuracy is underlined for each test distribution.} \label{tab:cifar_centers_2partition}

\addtolength{\tabcolsep}{-1.2pt}
\begin{tabular}{ccccccccc}
\Xhline{2\arrayrulewidth}
\multirow{2}{*}{} & \multicolumn{2}{c}{10:90} & \multicolumn{2}{c}{30:70} & \multicolumn{2}{c}{Linear} & \multicolumn{2}{c}{Random} \\ \cline{2-9}
                  & $c_0$       & $c_1$       & $c_0$       & $c_1$       & $c_0$        & $c_1$       & $c_0$        & $c_1$       \\ \hline
$0\degree$           & 74.8       & \underline{77.6}       & \underline{78.9}       & \underline{78.9}       & 77.2        & \underline{77.4}      & \underline{76.1}        & 76.0       \\
$90\degree$          & \underline{77.4}       & 75.3       & \underline{79.0}       & 78.8       & \underline{78.5}        & 78.0      & 75.3        & \underline{75.8}       \\ \Xhline{2\arrayrulewidth}
\end{tabular}
\end{table}

\begin{table}[t!]
\centering
\caption{Comparison between FedSoft and baselines on the CIFAR-10 data. $c_{0}^*$/$c_{90}^*$ represents the accuracy of the center that performs best on the 0\degree/90\degree distribution, and the number in the parenthesis indicates the index of that center. $\Bar{w}$ is the accuracy of local models averaged over all clients.} \label{tab:cifar_two_mixture_comparison}
\addtolength{\tabcolsep}{-2pt}
\begin{tabular}{ccccccc}
\Xhline{2\arrayrulewidth}
\multirow{2}{*}{} & \multicolumn{3}{c}{30:70} & \multicolumn{3}{c}{Linear} \\ \cline{2-7}
                  & $c_{0}^*$       & $c_{90}^*$       & $\Bar{w}$       & $c_{0}^*$       & $c_{90}^*$        & $\Bar{w}$       \\ \hline
FedSoft           & 78.9(1)       & 79.0(0)       &98.8       & 77.4(1)       & 78.5(0)        & 98.6      \\
IFCA          & 75.6(0)       & 76.1(0)       & 99.0       & 75.1(0)       & 75.3(0)        & 98.8       \\
FedEM          & 76.0(1)       & 74.6(1)      & 96.6  & 76.3(0)       & 75.8(0)        & 82.0       \\\Xhline{2\arrayrulewidth}
\end{tabular}
\end{table}

Table \ref{tab:cifar_centers_2partition} shows the accuracy of cluster models for all the four data partition patterns presented in Section \ref{sec:exp}. As we can see, \textbf{FedSoft} yields high quality cluster models with a clear separation of different distributions. Table \ref{tab:cifar_two_mixture_comparison} compares \textbf{FedSoft} with \textbf{IFCA} and \textbf{FedEM}. \textbf{FedSoft} has the best performance for cluster models, and produces fairly accurate local models.

\end{document}